\title{Robust Low-Tubal-Rank Tensor Completion based on Tensor Factorization and Maximum Correntopy Criterion\thanks{This work was supported by NSF CAREER Award CCF-1552497 and NSF Award CCF-2106339.

Yicong He and George K. Atia are with the Department of Electrical and Computer
Engineering, University of Central Florida, Orlando, FL, 32816, USA. e-mails: Yicong.He@ucf.edu, George.Atia@ucf.edu.}}
\author{Yicong He, George K. Atia, \IEEEmembership{Senior~Member,~IEEE}}
\useunder{\uline}{\ul}{}
\newtheorem{theorem}{Theorem}
\newtheorem{proposition}{Proposition}
\newtheorem{remark}{Remark}
\newtheorem {data model}{Data Model}
\newtheorem{lemma}{Lemma}
\newtheorem{mydef}{Definition}
\newcommand{\PreserveBackslash}[1]{\let\temp=\\#1\let\\=\temp}
\newcolumntype{C}[1]{>{\PreserveBackslash\centering}p{#1}}
\begin{document}

\maketitle

\begin{abstract}
The goal of tensor completion is to recover a tensor from a subset of its entries, often by exploiting its low-rank property. Among several useful definitions of tensor rank, the low-tubal-rank was shown to give a valuable characterization of the inherent low-rank structure of a tensor. While some low-tubal-rank tensor completion algorithms with favorable performance have been recently proposed, these algorithms utilize second-order statistics to measure the error residual, which may not work well when the observed entries contain large outliers. In this paper, we propose a new objective function for low-tubal-rank tensor completion, which uses correntropy as the error measure to mitigate the effect of the outliers. To efficiently optimize the proposed objective, we leverage a half-quadratic minimization technique whereby the optimization is transformed to a weighted low-tubal-rank tensor factorization problem. Subsequently, we propose two simple and efficient algorithms to obtain the solution and provide their convergence and complexity analysis. Numerical results using both synthetic and real data demonstrate the robust and superior performance of the proposed algorithms.
\end{abstract}

\begin{IEEEkeywords}
Tensor completion, tensor factorization, correntropy, half-quadratic,  alternating minimization
\end{IEEEkeywords}

\section{Introduction}
High-dimensional and multi-way data processing have received considerable attention in recent years given the ever-increasing amount of data with diverse modalities generated from different kinds of sensors, networks and systems. Since tensors are algebraic objects that can be represented as multi-dimensional arrays (generalizing scalars, vectors and matrices), they have marked ability to characterize multi-way (high order) data and capture intrinsic correlations across its different dimensions. This fact explains their wide usage and efficacy in numerous applications of computer vision \cite{dian2017hyperspectral,zhang2019computational}, pattern recognition \cite{xie2018unifying,zhao2019multi,he2006tensor,xie2020robust,tang2021one} and signal processing \cite{sidiropoulos2017tensor,cichocki2015tensor,tang2019tensor}.



Similar to matrices, the data represented by tensors may contain redundant information, which is referred to as the low-rank property of tensors. To exploit the underlying low-rank structure of high order tensors, several low-rank tensor models have been proposed based on different tensor decompositions, including CANDECOMP/PARAFAC (CP) decomposition \cite{kiers2000towards}, Tucker decomposition \cite{tucker1966some}, tensor ring decomposition \cite{zhao2016tensor} and tensor singular value decomposition (t-SVD) \cite{kilmer2011factorization}.

Tensor completion, a generalization of the popular matrix completion problem \cite{candes2009exact,candes2010matrix}, is the the task of filling in the missing entries of a partially observed tensor, typically by exploiting the low-rank property of the tensor. There exist several tensor completion algorithms tailored to different low-rank tensor models, such as the CANDECOMP/PARAFAC decomposition-based alternating minimization algorithm \cite{kiers2000towards,jain2014provable}, Tucker decomposition-based tensor completion using the Riemannian manifold approach \cite{tucker1966some,kasai2016low} and alternating minimization \cite{xu2015parallel}, the t-SVD-based completion algorithm using convex relaxation \cite{zhang2016exact}, alternating minimization \cite{zhou2017tensor,liu2019low} and Grassmannian optimization \cite{gilman2020grassmannian}.

\subsection{Robust Tensor Completion}
In the real world, the data observed could be perturbed by different kinds of noise originating from human errors and/or signal interference. Existing algorithms largely utilize the second-order statistics as their error measure, which works well in certain noisy settings, such as with noise from a Gaussian distribution. However, when the data is contaminated with large outliers, the performance of traditional algorithms is unsatisfactory in general. This motivated the development of robust algorithms for low-rank tensor recovery that are not unduly affected by the outliers \cite{goldfarb2014robust,han2018generalized,inoue2009robust}. While many such algorithms presume that all the entries of the tensor data are observed, several algorithms were designed to deal with incomplete or grossly corrupted data, which is the main focus of this work. 

The vast majority of existing robust tensor completion algorithms are based on tensor rank models that are different from the tubal rank model considered herein. In \cite{huang2014provable}, an $\ell_1$-norm regularized sum of nuclear norms (SNN-L1) completion algorithm is proposed. Rather than directly applying complex Tucker decomposition, which decomposes the tensor into a set of matrices and one small core tensor, SNN-L1 relaxes the low-tucker-rank constraint using a (weighted) sum of nuclear norms of tensor unfolding matrices. Using a similar convex relaxation of Tucker decomposition, a robust low-tucker-rank tensor completion algorithm that uses soft/hard thresholding (SNN-ST/HT) was developed in \cite{yang2015robust}. It introduces two M-estimators, the Welsch loss and the Cauchy loss, as error measures, which improves on SNN-L1. In the same vein, the authors in  \cite{huang2020robust} developed a robust $\ell_1$-norm regularized tensor ring nuclear norm (TRNN-L1) algorithm based on the tensor ring model. Similar to SNN, TRNN-L1 solves the complex tensor ring decomposition problem by minimizing the nuclear norm of circular unfolding matrices. 

{Recently, the tubal rank tensor model has been introduced in the context of tensor completion  \cite{kilmer2013third}. The tubal-rank characterization is based on tensor singular value decomposition (t-SVD) \cite{kilmer2011factorization}, which rests on the tensor-tensor product (t-product) as an extension from matrix algebra. Compared with other tensor models, the tubal rank model has provable theoretical guarantees \cite{kilmer2013third}. It also maintains the intrinsic multi-dimensional tensor structures as it relates to tensor factorization as a product of tensors. Its superior performance over alternate rank models in tensor completion has been established in various works 
\cite{zhang2016exact,zhou2017tensor,liu2019low}. We will further discuss the advantages of low-tubal-rank based tensor completion in Section \ref{sec:LTR_TC}. Several low-tubal-rank based tensor completion algorithms have been developed in the literature, including tensor nuclear norm (TNN) \cite{zhang2016exact} and its robust $\ell_1$-norm regularized version TNN-L1 \cite{jiang2019robust,wang2019robust}.}

Table \ref{tb:algorithm} summarizes the above-mentioned robust tensor completion algorithms along with the tensor rank model they adopt and the corresponding objective functions. As shown, existing robust algorithms utilize the matrix nuclear norm for regularization, which requires performing an SVD in every iteration. {For large matrices, SVD incurs a high computational cost. Further, because of the complex computation of an SVD on a large matrix, algorithms such as TNN-L1 are not readily amenable to parallel implementation on GPU.
}

\begin{table}[tb]
\renewcommand\arraystretch{1.5}
\centering
\caption{Objective functions of robust tensor completion algorithms}
\label{tb:algorithm}
\resizebox{0.5\textwidth}{!}
{%
\begin{tabular}{ccc}
\toprule
Algorithm     & Rank model  & Objective function
\\ \midrule
\begin{tabular}[c]{@{}c@{}}SNN-L1\\\cite{goldfarb2014robust}\end{tabular}        & Tucker      &  \begin{tabular}[c]{@{}c@{}}$\min\limits_{\boldsymbol{\mathcal{X}}, \boldsymbol{\mathcal{S}}}\sum\limits_{i=1}^{N}\left\|\boldsymbol{X}_{i,(i)}\right\|_{*}+\lambda\|\boldsymbol{\mathcal{S}}\|_{1}$\\ $\text { s.t. } \boldsymbol{\mathcal{P}}\circ\left(\sum\limits_{i=1}^{N} \boldsymbol{\mathcal{X}}_{i}+\boldsymbol{\mathcal{S}}\right)=\boldsymbol{\mathcal{P}}\circ{\boldsymbol{\mathcal{M}}}$\end{tabular}    \\    \midrule
\begin{tabular}[c]{@{}c@{}}SNN-ST\\\cite{yang2015robust}\end{tabular}   & Tucker      & \begin{tabular}[c]{@{}c@{}}$\min\limits_{\boldsymbol{\mathcal{X}}} \!\!\sum\limits_{i_1\ldots i_N\in\Omega}\rho_{\sigma}(({\mathcal{M}}_{i_1\ldots i_N}\!-\!(\!\!\sum\limits_{m=1}^{N} \!\!\boldsymbol{\mathcal{X}}_{m}\!)_{i_1\ldots i_N})\!)$   \\ $+\lambda \sum\limits_{j=1}^{N}\left\|\boldsymbol{X}_{j,(j)}\right\|_{*}$\end{tabular}    \\   \midrule
\begin{tabular}[c]{@{}c@{}}SNN-HT\\\cite{yang2015robust}\end{tabular}   & Tucker      & \begin{tabular}[c]{@{}c@{}}$\min\limits_{\boldsymbol{\mathcal{X}}} \!\!\sum\limits_{i_1\ldots i_N\in\Omega}\rho_{\sigma}(({\mathcal{M}}_{i_1\ldots i_N}\!-\!(\!\!\sum\limits_{m=1}^{N} \!\!\boldsymbol{\mathcal{X}}_{m}\!)_{i_1\ldots i_N})\!)$   \\ $+ \sum\limits_{j=1}^{N}\delta_{M_{r_j}}\!\!\left(\boldsymbol{X}_{j,(j)}\right)$\end{tabular}     \\   \midrule
\begin{tabular}[c]{@{}c@{}}TRNN-L1\\\cite{huang2020robust}\end{tabular}       & Tensor ring & \begin{tabular}[c]{@{}c@{}}$\min\limits_{\boldsymbol{\mathcal{X}}, \boldsymbol{\mathcal{S}}} \sum\limits_{d=1}^{N} w_{d}\left\|\boldsymbol{X}_{\{d, L\}}\right\|_{*}+\lambda_{d}\|\boldsymbol{\mathcal{S}}\|_{1}$\\ $\text { s.t. } \boldsymbol{\mathcal{P}}\circ\left(\boldsymbol{\mathcal{X}}+\boldsymbol{\mathcal{S}}\right)=\boldsymbol{\mathcal{P}}\circ{\boldsymbol{\mathcal{M}}}$\end{tabular}
    \\   \midrule
\begin{tabular}[c]{@{}c@{}}TNN-L1\\\cite{jiang2019robust,wang2019robust}\end{tabular}        & Tubal       & \begin{tabular}[c]{@{}c@{}}$\min\limits _{\boldsymbol{\mathcal{X}} . \boldsymbol{\mathcal{S}}}\frac{1}{n_{3}} \sum\limits_{i=1}^{n_{3}}\left\|\bar{\boldsymbol{X}}^{(i)}\right\|_{*}+\lambda\|\boldsymbol{\mathcal{S}}\|_{1}$\\ $\text { s.t., } \boldsymbol{\mathcal{P}}\circ\left(\boldsymbol{\mathcal{X}}+\boldsymbol{\mathcal{S}}\right)=\boldsymbol{\mathcal{P}}\circ{\boldsymbol{\mathcal{M}}}$\end{tabular}     \\   \midrule
\begin{tabular}[c]{@{}c@{}}HQ-TCTF\\ /HQ-TCASD\\
(This paper)\end{tabular} & Tubal  &       $\min\limits_{{\boldsymbol{\mathcal{X}}, \boldsymbol{\mathcal{Y}}}} \!\sum\limits_{i,j,k}\!  \!{{\cal{P}}_{ijk}\sigma^2\!\left({\!1\!-\!G_{\sigma}\!{{\left( {{{\cal{M}}_{ijk}} \!-\! {\left( {{\boldsymbol{\cal{X}}\!*\!\boldsymbol{{\cal{Y}}} }}\right)_{ijk}}}\! \right)}}}\! \right)}$     \\ \bottomrule
\end{tabular}%
}
\end{table}

\subsection{Contribution}
{In sharp contrast to the foregoing work, we propose a novel SVD-free {and parallelizable} robust tensor completion method based on tensor factorization and the maximum correntropy criterion \cite{liu2007correntropy,chen2016generalized} under the tubal rank model. 
Tensor factorization is theoretically grounded on the fact that a best tubal rank-$r$ approximation can be obtained from truncation of the t-SVD.
Further, algorithms based on tensor factorization  (as opposed to minimizing norms of tensor unfoldings) 
were shown to yield accurate performance  \cite{zhou2017tensor,liu2019low}.} Correntropy is an information-theoretic 
non-linear similarity measure that can provably handle the negative effect of large outliers \cite{he2010maximum,he2019robust,zheng2020broad}. Compared with the commonly used $\ell_1$-norm, correntropy is everywhere differentiable and is at the heart of several robust algorithms in different fields \cite{chen2017maximum,zhao2011kernel}. By introducing correntopy as our error measure, we propose a novel correntropy-based objective function for robust low-tubal-rank tensor completion. To efficiently solve the formulated completion problem, we first leverage a half-quadratic (HQ) optimization technique \cite{nikolova2005analysis} to transform the non-convex problem to a weighted tensor factorization problem. Then, two efficient and simple algorithms based on alternating minimization and alternating steepest descent are developed, and 
we analytically establish the convergence of both algorithms. Also, we propose an adaptive kernel width selection strategy to further improve the convergence rate and accuracy. The main contributions of the work are summarized below.

%
{1)} We propose a novel objective function for robust low-tubal-rank tensor completion, which uses tensor factorization to capture the low-rank structure and correntropy as the error measure to give robustness against outliers. As shown in Table \ref{tb:algorithm}, our approach imposes the low-rank structure through factorization. Compared with other existing nuclear-norm-based robust tensor completion algorithms, our factorization-based method does not need to perform multiple SVD computations. 

2) We reformulate the complex correntropy-based optimization problem as a weighted tensor factorization by leveraging the HQ minimization technique (Section \ref{sec:HC}). We develop two efficient algorithms (HQ-TCTF and HQ-TCASD) for robust tensor completion (See Section \ref{sec:HQ-TCTF} and \ref{sec:HQ-TCASD}). The algorithms utilize alternating minimization and alternating steepest descent, which avoid the costly computation of the SVD operations and are {readily parallelizable on GPU}. We also analyze the convergence and computational complexity of the proposed algorithms. 

3) We demonstrate the robust and efficient performance of the proposed algorithms through extensive numerical experiments performed with both synthetic and real data. {The proposed methods can outperform nuclear-norm-based methods in many noisy settings in terms of PSNR. With the use of parallel computation, the proposed methods can also run significantly faster than other algorithms.} 
%

The paper is organized as follows. In Section \ref{sec:bkgnd}, we introduce our notation and provide some preliminary background on the tensor properties, tensor completion, and the maximum correntropy criterion. In Section \ref{sec:methods}, we propose the new correntropy-based tensor completion cost and propose two HQ-based algorithms. In Section \ref{sec:results}, we present experimental results to demonstrate the reconstruction performance. Finally, conclusion is given in Section \ref{sec:conc}.

\section{Preliminaries}
\label{sec:bkgnd}

\subsection{Definitions and Notation}
\label{sec:prelim}
In this section, we review some important definitions and introduce notation used throughout the paper. Boldface uppercase script letters are used to denote tensors (e.g., $\boldsymbol{\cal{X}}$), and boldface letters to denote matrices (e.g., ${\boldsymbol{X}}$). Unless stated otherwise, we focus on third order tensors, i.e., ${\boldsymbol{\cal{X}}}\in{\mathbb{C}}^{n_1\times n_2\times n_3}$ where $n_1,n_2,n_3$ are the dimensions of each way of the tensor. The notation ${\boldsymbol{\cal{X}}}(i,:,:),{\boldsymbol{\cal{X}}}(:,i,:),{\boldsymbol{\cal{X}}}(:,:,i)$ denotes the frontal, lateral, horizontal slices of $\boldsymbol{\cal{X}}$, respectively, and  ${\boldsymbol{\cal{X}}}(i,j,:),{\boldsymbol{\cal{X}}}(:,j,k),{\boldsymbol{\cal{X}}}(i,:,k)$ denote the mode-1, mode-2, and mode-3 tubes, while ${\cal{X}}_{ijk}$ denotes the $(i,j,k)$-th entry of tensor $\boldsymbol{\cal{{X}}}$. The Frobenius norm of tensor is defined as $\|{\boldsymbol{\cal{X}}}\|_F=\sqrt{\sum_{i=1}^{n_1}\sum_{j=1}^{n_2}\sum_{k=1}^{n_3}|{{\cal{X}}_{ijk}}|^2}$.

In the frequency domain, $\bar{\boldsymbol{\cal{X}}}$ denotes the Fourier transform along the third mode of $\boldsymbol{\cal{X}}$. We use the convention, $\bar{\boldsymbol{\cal{X}}}=\operatorname{fft}({\boldsymbol{\cal{X}}},[\:],3)$ to denote the Fourier transform along the third dimension. Similarly, we use ${\boldsymbol{\cal{X}}}=\operatorname{ifft}({\bar{\boldsymbol{\cal{X}}}},[\:],3)$ for the inverse transform. We also define the matrix ${\bar{\boldsymbol{X}}}\in{\mathbb{R}}^{n_1n_3\times n_2n_3}$
\[
\bar{\boldsymbol{X}}=\operatorname{bdiag}(\bar{\boldsymbol{\cal{X}}})=\left[\begin{array}{cccc}
\bar{\boldsymbol{X}}^{(1)} & & \\
& \bar{\boldsymbol{X}}^{(2)} & \\
& & \ddots & \\
& & & \bar{\boldsymbol{X}}^{\left(n_3\right)}
\end{array}\right]
\]
where ${\boldsymbol{X}}^{(i)}:={\boldsymbol{\cal{X}}}(:,:,i)$, and $\operatorname{bdiag}(\cdot)$ denotes the operator that maps the tensor $\bar{\boldsymbol{\cal{X}}}$ to the block diagonal matrix $\bar{\boldsymbol{X}}$. The block circulant operator $\operatorname{bcirc}(\cdot)$ is defined as

\[
\operatorname{bcirc}({\boldsymbol{\cal{X}}})=\left[\begin{array}{cccc}
{\boldsymbol{X}}^{(1)} & {\boldsymbol{X}}^{\left(n_3\right)} & \cdots & {\boldsymbol{X}}^{(2)} \\
{\boldsymbol{X}}^{(2)} & {\boldsymbol{X}}^{(1)} & \cdots & {\boldsymbol{X}}^{(3)} \\
\vdots & \vdots & \ddots & \vdots \\
{\boldsymbol{X}}^{\left(n_3\right)} & {\boldsymbol{X}}^{\left(n_3-1\right)} & \cdots & {\boldsymbol{X}}^{(1)}
\end{array}\right]\:.
\]
Therefore, the following relation holds, 
\begin{equation}
\left(\boldsymbol{F}_{n_{3}} \otimes \boldsymbol{I}_{n_{1}}\right) \operatorname{bcirc}(\boldsymbol{\mathcal { X }}) \left(\boldsymbol{F}_{n_{3}}^{-1} \otimes \boldsymbol{I}_{n_{2}}\right)=\bar{\boldsymbol{X}}\:,
\label{eq:prelim_1}
\end{equation}
where $\boldsymbol{F}_{n_{3}}\in\mathbb{C}^{n_3\times n_3}$ is the Discrete Fourier Transform (DFT) matrix, $\otimes$ is the Kronecker product and $\boldsymbol{I}_{n_1}\in\mathbb{R}^{n_1\times n_1}$ is the identity matrix. Further, $\boldsymbol{F}_{n_{3}}^{-1}$ can be computed as $\boldsymbol{F}_{n_3}^{-1}=\boldsymbol{F}_{n_3}^{*}/n_3$, where $\boldsymbol{X}^*$ denotes the Hermitian transpose of $\boldsymbol{X}$.

To define the tensor-tensor product (t-product), we first define the unfold operator $\operatorname{unfold}(\cdot)$, which maps the tensor ${\boldsymbol{\cal{X}}}$ to a matrix $\tilde{\boldsymbol{X}}\in\mathbb{C}^{n_1n_3\times n_2}$,
\[
\tilde{\boldsymbol{X}}=\operatorname{unfold}(\boldsymbol{\mathcal{X}})=\left[\begin{array}{c}
{\boldsymbol{X}}^{(1)} \\
{\boldsymbol{X}}^{(2)} \\
\vdots \\
{\boldsymbol{X}}^{\left(n_{3}\right)}
\end{array}\right]
\]
and its inverse operator $\operatorname{fold}(\cdot)$ is defined as
\[\operatorname{fold}(\tilde{\boldsymbol{X}})=\boldsymbol{\mathcal{X}}\:.
\]
We can readily state the definition of the t-product.
\begin{mydef}
[t-product \cite{kilmer2011factorization}] The t-product $\boldsymbol{\mathcal{A}} * \boldsymbol{\mathcal{B}}$ of $\boldsymbol{\mathcal{A}} \in$ $\mathbb{R}^{n_{1} \times n_{2} \times n_{3}}$ and $\boldsymbol{\mathcal{B}} \in \mathbb{R}^{n_{2} \times n_{4} \times n_{3}}$ is the tensor of size $n_1\times n_4 \times n_3$ given by
\[
\boldsymbol{\mathcal{A}} * \boldsymbol{\mathcal{B}}=\operatorname{fold}(\operatorname{bcirc}(\boldsymbol{\mathcal{A}}) \cdot \operatorname{unfold}(\boldsymbol{\mathcal{B})})\]
\label{def:tprod}
\end{mydef}
Further, we will need the following lemma from \cite{kilmer2011factorization}.
\begin{lemma}
\cite{kilmer2011factorization} Suppose that $\boldsymbol{\mathcal{A}} \in \mathbb{R}^{n_{1} \times n_{2} \times n_{3}}, \boldsymbol{\mathcal{B}} \in$
$\mathbb{R}^{n_{2} \times n_{4} \times n_{3}}$ are two arbitrary tensors. Let $\boldsymbol{\mathcal{F}}=\boldsymbol{\mathcal{A}} *\boldsymbol{\mathcal{B}} .$ Then,
the following properties hold.

(1) $\|\boldsymbol{\mathcal{A}}\|_{F}^{2}=\frac{1}{n_{3}}\|\bar{\boldsymbol{A}}\|_{F}^{2}$

(2) $\boldsymbol{\mathcal{F}}=\boldsymbol{\mathcal{A}} * \boldsymbol{\mathcal{B}}$ and $\bar{\boldsymbol{F}}=\bar{\boldsymbol{A}} \bar{\boldsymbol{B}}$ are equivalent.
\label{lem:t-product}
\end{lemma}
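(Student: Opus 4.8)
The plan is to prove the two parts separately, relying in both cases on the block-diagonalization identity \eqref{eq:prelim_1} together with the fact that the (unnormalized) DFT matrix satisfies $\boldsymbol{F}_{n_3}^{*}\boldsymbol{F}_{n_3}=n_3\boldsymbol{I}_{n_3}$, which is just a restatement of $\boldsymbol{F}_{n_3}^{-1}=\boldsymbol{F}_{n_3}^{*}/n_3$. For part (1), I would first note that $\bar{\boldsymbol{A}}=\operatorname{bdiag}(\bar{\boldsymbol{\mathcal{A}}})$ is block diagonal, so its Frobenius norm collects exactly the entries of the frontal slices $\bar{\boldsymbol{A}}^{(i)}=\bar{\boldsymbol{\mathcal{A}}}(:,:,i)$ and the vanishing off-diagonal blocks contribute nothing; hence $\|\bar{\boldsymbol{A}}\|_F^2=\|\bar{\boldsymbol{\mathcal{A}}}\|_F^2$. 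Since $\bar{\boldsymbol{\mathcal{A}}}$ is obtained by applying the length-$n_3$ DFT independently to each mode-3 tube of $\boldsymbol{\mathcal{A}}$, Parseval's identity gives $\|\boldsymbol{F}_{n_3}\boldsymbol{v}\|^2=n_3\|\boldsymbol{v}\|^2$ for every tube $\boldsymbol{v}$; summing over all $n_1n_2$ tubes yields $\|\bar{\boldsymbol{\mathcal{A}}}\|_F^2=n_3\|\boldsymbol{\mathcal{A}}\|_F^2$, and combining the two equalities gives $\|\boldsymbol{\mathcal{A}}\|_F^2=\tfrac{1}{n_3}\|\bar{\boldsymbol{A}}\|_F^2$.

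For part (2), the key idea is to read \eqref{eq:prelim_1} as a similarity transformation that simultaneously block-diagonalizes all block-circulant matrices, and therefore intertwines matrix multiplication. Inverting \eqref{eq:prelim_1} gives $\operatorname{bcirc}(\boldsymbol{\mathcal{X}})=(\boldsymbol{F}_{n_3}^{-1}\otimes\boldsymbol{I})\,\bar{\boldsymbol{X}}\,(\boldsymbol{F}_{n_3}\otimes\boldsymbol{I})$. I would substitute this form for both $\operatorname{bcirc}(\boldsymbol{\mathcal{A}})$ and $\operatorname{bcirc}(\boldsymbol{\mathcal{B}})$ and multiply; the inner factors $(\boldsymbol{F}_{n_3}\otimes\boldsymbol{I}_{n_2})(\boldsymbol{F}_{n_3}^{-1}\otimes\boldsymbol{I}_{n_2})$ collapse to the identity, leaving $\operatorname{bcirc}(\boldsymbol{\mathcal{A}})\operatorname{bcirc}(\boldsymbol{\mathcal{B}})=(\boldsymbol{F}_{n_3}^{-1}\otimes\boldsymbol{I}_{n_1})\,\bar{\boldsymbol{A}}\bar{\boldsymbol{B}}\,(\boldsymbol{F}_{n_3}\otimes\boldsymbol{I}_{n_4})$. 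Because $\bar{\boldsymbol{A}}$ and $\bar{\boldsymbol{B}}$ are block diagonal, so is their product $\bar{\boldsymbol{A}}\bar{\boldsymbol{B}}$, with $i$-th block $\bar{\boldsymbol{A}}^{(i)}\bar{\boldsymbol{B}}^{(i)}$. Writing this product as $\bar{\boldsymbol{G}}$ for the tensor $\boldsymbol{\mathcal{G}}$ whose Fourier slices are $\bar{\boldsymbol{G}}^{(i)}=\bar{\boldsymbol{A}}^{(i)}\bar{\boldsymbol{B}}^{(i)}$ and applying \eqref{eq:prelim_1} once more identifies the right-hand side with $\operatorname{bcirc}(\boldsymbol{\mathcal{G}})$, i.e. $\operatorname{bcirc}(\boldsymbol{\mathcal{A}})\operatorname{bcirc}(\boldsymbol{\mathcal{B}})=\operatorname{bcirc}(\boldsymbol{\mathcal{G}})$. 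Comparing first block columns then finishes the forward direction: by Definition \ref{def:tprod}, $\operatorname{unfold}(\boldsymbol{\mathcal{A}}*\boldsymbol{\mathcal{B}})=\operatorname{bcirc}(\boldsymbol{\mathcal{A}})\operatorname{unfold}(\boldsymbol{\mathcal{B}})$ is exactly the first block column of $\operatorname{bcirc}(\boldsymbol{\mathcal{A}})\operatorname{bcirc}(\boldsymbol{\mathcal{B}})=\operatorname{bcirc}(\boldsymbol{\mathcal{G}})$, which is $\operatorname{unfold}(\boldsymbol{\mathcal{G}})$, so $\boldsymbol{\mathcal{F}}=\boldsymbol{\mathcal{A}}*\boldsymbol{\mathcal{B}}=\boldsymbol{\mathcal{G}}$ and hence $\bar{\boldsymbol{F}}=\bar{\boldsymbol{G}}=\bar{\boldsymbol{A}}\bar{\boldsymbol{B}}$.

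The converse implication, and thus the claimed equivalence, follows from reversibility of every step: conjugation by $(\boldsymbol{F}_{n_3}\otimes\boldsymbol{I})$ is invertible, and a block-circulant matrix is uniquely determined by its first block column, so $\bar{\boldsymbol{F}}=\bar{\boldsymbol{A}}\bar{\boldsymbol{B}}$ forces $\operatorname{bcirc}(\boldsymbol{\mathcal{F}})=\operatorname{bcirc}(\boldsymbol{\mathcal{A}})\operatorname{bcirc}(\boldsymbol{\mathcal{B}})$ and therefore $\boldsymbol{\mathcal{F}}=\boldsymbol{\mathcal{A}}*\boldsymbol{\mathcal{B}}$. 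I expect the main difficulty to be bookkeeping rather than conceptual: one must carefully track the three distinct Kronecker-factor sizes $\boldsymbol{I}_{n_1},\boldsymbol{I}_{n_2},\boldsymbol{I}_{n_4}$ through the similarity transform so that the inner DFT factors cancel on the correct dimension, and one must justify that the product of two block-circulant matrices is again block-circulant. Viewing \eqref{eq:prelim_1} as an algebra isomorphism between block-circulant and block-diagonal matrices makes this structural claim immediate, and is in my view the cleanest route to the result.
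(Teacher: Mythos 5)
The paper itself offers no proof of this lemma: it is imported verbatim from Kilmer and Martin \cite{kilmer2011factorization}, so there is no in-paper argument to compare against. Your proposal is a correct, self-contained derivation from the paper's own definitions, and it follows essentially the same route as the original reference: part (1) is Parseval's identity for the unnormalized DFT applied tube-by-tube, combined with the observation that $\|\bar{\boldsymbol{A}}\|_F^2$ collects exactly the entries of the frontal slices of $\bar{\boldsymbol{\mathcal{A}}}$; part (2) is the standard ``block-circulant algebra is simultaneously block-diagonalized by $\boldsymbol{F}_{n_3}\otimes\boldsymbol{I}$'' argument, reading off the first block column to recover the t-product. Two small points are worth tightening if you write this out in full. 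First, the claim that the product of two block-circulant matrices is again block-circulant is most cleanly verified by computing the $(j,k)$ block of $(\boldsymbol{F}_{n_3}^{-1}\otimes\boldsymbol{I})\,\bar{\boldsymbol{D}}\,(\boldsymbol{F}_{n_3}\otimes\boldsymbol{I})$ for a block-diagonal $\bar{\boldsymbol{D}}$ and observing it depends only on $k-j \bmod n_3$; invoking ``the isomorphism'' without this computation is slightly circular, since the surjectivity of the conjugation map onto block-circulants is exactly what is being used. Second, when you define $\boldsymbol{\mathcal{G}}$ by inverse-transforming the slices $\bar{\boldsymbol{A}}^{(i)}\bar{\boldsymbol{B}}^{(i)}$, you should note that the conjugate symmetry inherited from the real tensors $\boldsymbol{\mathcal{A}},\boldsymbol{\mathcal{B}}$ guarantees $\boldsymbol{\mathcal{G}}$ is real. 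Neither point is a gap in substance, only in bookkeeping, which you yourself flag as the main difficulty.
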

According to the second property in Lemma \ref{lem:t-product}, the t-product is equivalent to matrix multiplication in the frequency domain.

Next, we state the definitions of the Tensor Singular Value Decomposition (t-SVD) and the tubal rank.
\begin{theorem}
[t-SVD
\cite{kilmer2011factorization,lu2019tensor}] The tensor $\boldsymbol{\mathcal{A}} \in \mathbb{R}^{n_{1} \times n_{2} \times n_{3}}$ can be factorized as $\boldsymbol{\mathcal{A}}=\boldsymbol{\mathcal{U}} * \boldsymbol{\mathcal{S}} * \boldsymbol{\mathcal{V}}^{*}$, where $\boldsymbol{\mathcal{U}} \in \mathbb{R}^{n_{1} \times n_{1} \times n_{3}}, \boldsymbol{\mathcal{V}} \in \mathbb{R}^{n_{2} \times n_{2} \times n_{3}}$ are orthogonal, and $\boldsymbol{\mathcal{S}} \in$
$\mathbb{R}^{n_{1} \times n_{2} \times n_{3}}$ is an $f$-diagonal tensor, i.e., each of the frontal slices of $\boldsymbol{\mathcal{S}}$ is a diagonal matrix. The diagonal entries in $\boldsymbol{\mathcal{S}}(:,:,1)$ are called the singular values of $\boldsymbol{\mathcal{A}}$.
\end{theorem}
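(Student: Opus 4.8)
The plan is to exploit the equivalence between the t-product and block-diagonal matrix multiplication in the Fourier domain, as established in Lemma \ref{lem:t-product}. First I would apply the DFT along the third mode to pass from $\boldsymbol{\mathcal{A}}$ to $\bar{\boldsymbol{A}}=\operatorname{bdiag}(\bar{\boldsymbol{\mathcal{A}}})$, whose frontal slices $\bar{\boldsymbol{A}}^{(i)}$ are ordinary complex matrices. Since the matrix SVD is guaranteed to exist, I would factor each slice as $\bar{\boldsymbol{A}}^{(i)}=\bar{\boldsymbol{U}}^{(i)}\bar{\boldsymbol{S}}^{(i)}(\bar{\boldsymbol{V}}^{(i)})^{*}$ with $\bar{\boldsymbol{U}}^{(i)},\bar{\boldsymbol{V}}^{(i)}$ unitary and $\bar{\boldsymbol{S}}^{(i)}$ nonnegative diagonal, and assemble the block-diagonal matrices $\bar{\boldsymbol{U}},\bar{\boldsymbol{S}},\bar{\boldsymbol{V}}$. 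Applying the inverse DFT then yields candidate tensors $\boldsymbol{\mathcal{U}},\boldsymbol{\mathcal{S}},\boldsymbol{\mathcal{V}}$, and by property (2) of Lemma \ref{lem:t-product} the slicewise identity $\bar{\boldsymbol{A}}=\bar{\boldsymbol{U}}\bar{\boldsymbol{S}}\bar{\boldsymbol{V}}^{*}$ is exactly equivalent to the desired spatial-domain factorization $\boldsymbol{\mathcal{A}}=\boldsymbol{\mathcal{U}}*\boldsymbol{\mathcal{S}}*\boldsymbol{\mathcal{V}}^{*}$.

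Next I would verify the three structural claims. Orthogonality of $\boldsymbol{\mathcal{U}}$ and $\boldsymbol{\mathcal{V}}$ reduces, again by Lemma \ref{lem:t-product}, to checking that $\bar{\boldsymbol{U}}^{*}\bar{\boldsymbol{U}}$ and $\bar{\boldsymbol{V}}^{*}\bar{\boldsymbol{V}}$ equal the Fourier image of the identity tensor; this holds because each block satisfies $(\bar{\boldsymbol{U}}^{(i)})^{*}\bar{\boldsymbol{U}}^{(i)}=\boldsymbol{I}$ by unitarity. For the $f$-diagonal property, each $\bar{\boldsymbol{S}}^{(i)}$ is diagonal, so every off-diagonal tube of $\bar{\boldsymbol{\mathcal{S}}}$ is identically zero; since the inverse DFT acts linearly along the third mode, the corresponding off-diagonal tubes of $\boldsymbol{\mathcal{S}}$ vanish as well, making each frontal slice a diagonal matrix.

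The step I expect to be the main obstacle is ensuring that $\boldsymbol{\mathcal{U}},\boldsymbol{\mathcal{S}},\boldsymbol{\mathcal{V}}$ are \emph{real} tensors, as the statement requires, rather than merely complex tensors whose t-product reproduces $\boldsymbol{\mathcal{A}}$. Because $\boldsymbol{\mathcal{A}}$ is real, its transform along the third mode is conjugate-symmetric: $\bar{\boldsymbol{A}}^{(1)}$ is real and $\overline{\bar{\boldsymbol{A}}^{(i)}}=\bar{\boldsymbol{A}}^{(n_3-i+2)}$ for $i\ge 2$. Since the SVD of a slice is not unique, I would not choose the factorizations of conjugate-paired slices independently; instead, after fixing the SVD of $\bar{\boldsymbol{A}}^{(i)}$, I would \emph{define} the factors of $\bar{\boldsymbol{A}}^{(n_3-i+2)}$ by conjugation. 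This is legitimate because conjugating $\bar{\boldsymbol{A}}^{(i)}=\bar{\boldsymbol{U}}^{(i)}\bar{\boldsymbol{S}}^{(i)}(\bar{\boldsymbol{V}}^{(i)})^{*}$ produces a valid SVD of the conjugate partner, the singular values being real and unchanged. With this coordinated choice $\bar{\boldsymbol{U}},\bar{\boldsymbol{V}}$ and the diagonal $\bar{\boldsymbol{S}}$ are all conjugate-symmetric, and the inverse DFT of a conjugate-symmetric sequence is real, so the recovered tensors are real. A minor care point is the self-paired Nyquist slice when $n_3$ is even, which is real and can be factored directly.
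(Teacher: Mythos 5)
Your proposal is correct and is essentially the standard argument from the cited reference \cite{kilmer2011factorization}: the paper itself states this theorem as a preliminary without proof, and the canonical derivation is exactly what you describe — slice-wise matrix SVDs in the Fourier domain, reassembly via the inverse DFT, with the conjugate-symmetric pairing of the SVDs (and direct handling of the real first and, for even $n_3$, Nyquist slices) being precisely the step needed to guarantee real factors. No gaps.
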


\begin{mydef}
[Tensor tubal-rank \cite{kilmer2013third}] For any $\boldsymbol{\mathcal{A}} \in \mathbb{R}^{n_{1} \times n_{2} \times n_{3}},$ the tensor tubal-rank, rank$_{t}(\boldsymbol{\mathcal{A}}),$ is the number of non-zero singular tubes of $\boldsymbol{\mathcal{S}}$ from the t-SVD, i.e.,
\[
\operatorname{rank}_{t}(\boldsymbol{\mathcal{A}})=\#\{i: \boldsymbol{\mathcal{S}}(i, i,:)\neq0\}\:.
\]
\end{mydef}
We will also need the following definition of tensor multi-rank.

\begin{lemma}
{
[Best tubal rank-$r$ approximation \cite{kilmer2013third}] Let the t-SVD of $\boldsymbol{\mathcal{A}} \in \mathbb{R}^{n_1 \times n_2 \times n_3}$ be $\boldsymbol{\mathcal{A}}=\boldsymbol{\mathcal{U}} * \boldsymbol{\mathcal{S}} * \boldsymbol{\mathcal{V}}^{*}$. Given a tubal rank $r$, define $\boldsymbol{\mathcal{A}}_{r}:=\sum_{s=1}^{r} \boldsymbol{\mathcal{U}}(:, s,:) * \boldsymbol{\mathcal{S}}(s, s,:) * \boldsymbol{\mathcal{V}}^{*}(:, s,:)$.
Then 
\[
\boldsymbol{\mathcal{A}}_{r}=\underset{\check{\boldsymbol{\mathcal{A}}} \in \mathbb{A}}{\arg \min }\|\boldsymbol{\mathcal{A}}-\check{\boldsymbol{{\mathcal{A}}}}\|_{F}, 
\]
where $\mathbb{A}:=\left\{\boldsymbol{\mathcal{X}} * \boldsymbol{\mathcal{Y}} \mid \boldsymbol{\mathcal{X}} \in\mathbb{R}^{n_1 \times r \times n_3}, \boldsymbol{\mathcal{Y}} \in \mathbb{R}^{r \times n_2 \times n_3}\right\}$.
\label{lm:approx}
}
\end{lemma}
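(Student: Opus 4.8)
The plan is to lift the problem to the Fourier domain, where by Lemma~\ref{lem:t-product} the t-product becomes a block-diagonal matrix product and the tensor Frobenius norm decouples across frontal slices; the problem then reduces to $n_3$ independent matrix low-rank approximations handled by the Eckart--Young--Mirsky theorem. Concretely, for any candidate $\check{\boldsymbol{\mathcal{A}}}$, part~(1) of Lemma~\ref{lem:t-product} together with the block-diagonal structure of $\bar{\boldsymbol{A}}=\operatorname{bdiag}(\bar{\boldsymbol{\mathcal{A}}})$ gives
\[
\|\boldsymbol{\mathcal{A}}-\check{\boldsymbol{\mathcal{A}}}\|_F^2 = \frac{1}{n_3}\|\bar{\boldsymbol{A}}-\bar{\check{\boldsymbol{A}}}\|_F^2 = \frac{1}{n_3}\sum_{i=1}^{n_3}\big\|\bar{\boldsymbol{A}}^{(i)}-\bar{\check{\boldsymbol{A}}}^{(i)}\big\|_F^2 ,
\]
so the objective separates over the $n_3$ frequency slices.

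Next I would characterize $\mathbb{A}$ in the Fourier domain. By part~(2) of Lemma~\ref{lem:t-product}, $\check{\boldsymbol{\mathcal{A}}}=\boldsymbol{\mathcal{X}}*\boldsymbol{\mathcal{Y}}$ with $\boldsymbol{\mathcal{X}}\in\mathbb{R}^{n_1\times r\times n_3}$, $\boldsymbol{\mathcal{Y}}\in\mathbb{R}^{r\times n_2\times n_3}$ is equivalent to $\bar{\check{\boldsymbol{A}}}^{(i)}=\bar{\boldsymbol{X}}^{(i)}\bar{\boldsymbol{Y}}^{(i)}$ for every $i$, a product of an $n_1\times r$ and an $r\times n_2$ matrix; hence each slice $\bar{\check{\boldsymbol{A}}}^{(i)}$ has rank at most $r$. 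Minimizing each term of the sum separately, the Eckart--Young--Mirsky theorem yields the slicewise lower bound $\|\bar{\boldsymbol{A}}^{(i)}-\bar{\check{\boldsymbol{A}}}^{(i)}\|_F \ge \|\bar{\boldsymbol{A}}^{(i)}-(\bar{\boldsymbol{A}}^{(i)})_r\|_F$, where $(\bar{\boldsymbol{A}}^{(i)})_r$ denotes the truncated SVD of $\bar{\boldsymbol{A}}^{(i)}$ retaining its $r$ largest singular values.

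It then remains to show that $\boldsymbol{\mathcal{A}}_r$ attains this decoupled bound. Applying part~(2) to $\boldsymbol{\mathcal{A}}=\boldsymbol{\mathcal{U}}*\boldsymbol{\mathcal{S}}*\boldsymbol{\mathcal{V}}^*$ gives $\bar{\boldsymbol{A}}^{(i)}=\bar{\boldsymbol{U}}^{(i)}\bar{\boldsymbol{S}}^{(i)}(\bar{\boldsymbol{V}}^{(i)})^*$; since $\boldsymbol{\mathcal{U}},\boldsymbol{\mathcal{V}}$ are orthogonal (so $\bar{\boldsymbol{U}}^{(i)},\bar{\boldsymbol{V}}^{(i)}$ are unitary) and $\boldsymbol{\mathcal{S}}$ is $f$-diagonal with decreasing singular values on each slice, this is an SVD of $\bar{\boldsymbol{A}}^{(i)}$. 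Therefore the Fourier slice of $\boldsymbol{\mathcal{A}}_r$ equals $\bar{\boldsymbol{A}}_r^{(i)}=\sum_{s=1}^{r}\bar{\boldsymbol{U}}^{(i)}(:,s)\,\bar{\boldsymbol{S}}^{(i)}(s,s)\,(\bar{\boldsymbol{V}}^{(i)}(:,s))^*=(\bar{\boldsymbol{A}}^{(i)})_r$, i.e.\ exactly the per-slice minimizer. Summing over $i$ shows $\boldsymbol{\mathcal{A}}_r$ meets the lower bound, and since $\boldsymbol{\mathcal{A}}_r=\boldsymbol{\mathcal{U}}(:,1{:}r,:)*\big(\boldsymbol{\mathcal{S}}(1{:}r,1{:}r,:)*\boldsymbol{\mathcal{V}}^*(:,1{:}r,:)\big)\in\mathbb{A}$, it is a minimizer.

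The main obstacle I anticipate is the reality constraint: since $\boldsymbol{\mathcal{X}},\boldsymbol{\mathcal{Y}}$ must be real, their Fourier slices are coupled by a conjugate symmetry across $i$ and cannot be chosen fully independently, so one cannot assert that arbitrary per-slice minimizers assemble into a real tensor. I would sidestep this by observing that the decoupled inequality above holds for \emph{every} feasible $\check{\boldsymbol{\mathcal{A}}}$ irrespective of the coupling, hence is a valid lower bound for the constrained problem; it then suffices that the single candidate $\boldsymbol{\mathcal{A}}_r$ is real, lies in $\mathbb{A}$, and meets the bound, all of which hold because the t-SVD of the real tensor $\boldsymbol{\mathcal{A}}$ already carries the required conjugate symmetry. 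A secondary point to check is the ordering convention of the t-SVD: the singular values must be sorted decreasingly within each frequency slice for the truncation $\boldsymbol{\mathcal{A}}_r$ to coincide slicewise with the Eckart--Young minimizer.
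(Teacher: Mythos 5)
The paper does not prove this lemma; it is imported verbatim from \cite{kilmer2013third} as a known result, so there is no in-paper argument to compare against. Your proof is correct and is essentially the standard argument from that reference: pass to the Fourier domain via Lemma~\ref{lem:t-product} so that the norm decouples over the $n_3$ frequency slices, observe that membership in $\mathbb{A}$ forces each slice $\bar{\check{\boldsymbol{A}}}^{(i)}$ to have rank at most $r$, apply Eckart--Young--Mirsky slicewise, and check that the Fourier slices of $\boldsymbol{\mathcal{A}}_r$ are exactly the truncated SVDs. Your handling of the reality constraint is the right one --- the slicewise bound is valid for every feasible candidate regardless of the conjugate-symmetry coupling, and the single candidate $\boldsymbol{\mathcal{A}}_r$ is real, lies in $\mathbb{A}$, and attains it --- and your caveat about the decreasing ordering of singular values within each frequency slice is exactly the convention built into the t-SVD construction, so it holds.
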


\begin{mydef}
[Tensor Multi-Rank \cite{kilmer2013third}] For any tensor $\boldsymbol{\mathcal{A}} \in \mathbb{R}^{n_{1} \times n_{2} \times n_{3}},$ its multi-rank rank$_{m}(\boldsymbol{\mathcal{A}})$ is a vector
defined as $\boldsymbol{r}=\left(\operatorname{rank}(\bar{\boldsymbol{A}}^{(1)}) ; \cdots ; \bar{\boldsymbol{A}}^{\left(n_{3}\right)}\right). $ {Specifically, the relation between tubal-rank and tensor multi-rank is
\[
\operatorname{rank}_{t}(\boldsymbol{\mathcal{A}})=\max \left(r_{1}, \cdots, r_{n_{3}}\right)\:,
\]
where $r_i$ is the $i$-th element of $\boldsymbol{r}$.}
\end{mydef}

\subsection{Low-tubal-rank Tensor Completion}
\label{sec:LTR_TC}
Tensor completion is the task of recovering a tensor $\boldsymbol{\cal{M}}\in\mathbb{R}^{n_1\times n_2\times n_3}$ from a subset of its entries by leveraging the low-rank property of the tensor. When using tubal rank for the definition of the rank, the low-tubal-rank property typically amounts to $\operatorname{rank}_{t}(\boldsymbol{\mathcal{M}})\ll \max\{n_1,n_2\}$. Specifically, by defining the observed subset of entries $\boldsymbol{\Omega}\subseteq[n_1]\times[n_2]\times[n_3]$ and its indicator tensor $\boldsymbol{\cal{P}}$,
\begin{equation}
{\cal{P}}_{ijk}=\left\{\begin{array}{cl}
1, & \text {if }(i,j,k) \in \Omega \\
0, & \text {otherwise}
\end{array}\right.
\end{equation}
the low-tubal-rank tensor completion problem can be formulated through the minimization, 
\begin{equation}
\underset{\boldsymbol{\mathcal{Z}} \in \mathbb{R}^{n_1 \times n_2 \times n_3}}{\min }\operatorname{rank}_{t}(\boldsymbol{\mathcal{Z}}) \text {,  s.t. } {\boldsymbol{\cal{P}}}\circ(\boldsymbol{\mathcal{Z}}- \boldsymbol{\mathcal{M}})=\boldsymbol{0}\:,
\label{eq:MCorigin}
\end{equation}
where $\circ$ denotes the Hadamard (element-wise) product of the two same-size tensors.
It is known that \eqref{eq:MCorigin} is NP-hard. To address this problem, several methods were proposed, which can be categorized into two main categories: 

1) Convex relaxation \cite{zhang2016exact,hu2016twist}: In this approach, \eqref{eq:MCorigin} is relaxed to obtain a convex optimization problem. 
Specifically, 
by defining the tensor nuclear norm (TNN) $$\|\boldsymbol{\mathcal{A}}\|_{TNN}=\frac{1}{n_{3}} \sum_{i=1}^{n_{3}}\left\|\bar{\boldsymbol{A}}^{(i)}\right\|_{*}$$ where $\|\cdot\|_{*}$ denotes the matrix nuclear norm, \eqref{eq:MCorigin} can be relaxed to
\begin{equation}
\underset{\boldsymbol{\mathcal{Z}} \in \mathbb{R}^{n_1 \times n_2 \times n_3}}{\min }\sum_{i=1}^{n_{3}}\left\|\bar{\boldsymbol{Z}}^{(i)}\right\|_{*} \text {,  s.t. } {\boldsymbol{\cal{P}}}\circ(\boldsymbol{\mathcal{Z}}- \boldsymbol{\mathcal{M}})=\boldsymbol{0}\:.
\end{equation}
The iterative solver to the nuclear norm-based relaxation has to compute a SVD at each iteration, which incurs high computational complexity for large scale high-dimensional data.

2) Tensor factorization: Similar to the Powerfactorization method proposed for matrix completion \cite{haldar2009rank}, a low-tubal-rank tensor can be represented as the t-product of two smaller tensors \cite{kilmer2011factorization}. Specifically, the recovered tensor $\boldsymbol{\cal{M}}\in\mathbb{R}^{n_1\times n_2 \times n_3}$ can be factorized into the t-product of two tensors $\boldsymbol{\cal{X}}\in\mathbb{R}^{n_1\times r\times n_3}$ and $\boldsymbol{\cal{Y}}\in\mathbb{R}^{r\times n_2 \times n_3}$, where $r$ is the tubal rank of $\boldsymbol{\cal{M}}$ \cite{zhou2017tensor}. The tensor factorization then solves tensor completion by utilizing the objective function
\begin{equation}
\min_{{\boldsymbol{\mathcal{X}}, \boldsymbol{\mathcal{Y}}}} J({\boldsymbol{\mathcal{X}}, \boldsymbol{\mathcal{Y}}}):=\left\|{\boldsymbol{\cal{P}}}\circ(\boldsymbol{\mathcal{X}} * \boldsymbol{\mathcal{Y}}-\boldsymbol{\mathcal{M}})\right\|_{F}^{2}\:.
\label{eq:TF}
\end{equation}
Tensor factorization can avoid the high complexity associated with performing the SVD,  
and the complexity is reduced due to the inherent low-rank property. Two algorithms based on tensor factorization were proposed, namely, Tubal-Altmin (TAM) \cite{liu2019low} and TCTF \cite{zhou2017tensor}.

{Low-tubal-rank-based tensor completion offers several advantages over tensor completion using other tensor rank models (e.g., CP rank, Tucker rank, and tensor ring rank). First, other methods usually impose low rank constraints through the nuclear norm minimization on unfolding matrices of the tensor, which {may destroy} the original multi-dimensional structure of the tensor data. By contrast, based on tensor algebra, the tubal rank based methods directly impose a low-tubal-rank constraint on a tensor, and can well capture the inherent low-rank structure of a tensor \cite{zhou2017tensor,liu2019low}. Second, unlike other rank models for which it is hard or infeasible to obtain an optimal approximation with truncated decomposition, in the tubal rank model, such an approximation is given in Lemma \ref{lm:approx}, which gives a theoretical footing for our proposed method. Third, if the tensor has a large $n_3$, the dimensions of the unfolding matrices will be very large, which degrades the computational efficiency of said algorithms. On the other hand, for the tubal-rank-based method, the SVD or factorization are applied to matrices of size $n_1\times n_2$, which are smaller than the unfolding matrices.} 

\subsection{Maximum Correntropy Criterion (MCC)}
Correntropy is a local and nonlinear similarity measure between two random variables within a ``window'' in the joint space determined by the kernel width.
Given two random variables $X$ and $Y$, the correntropy is defined as \cite{liu2007correntropy}
\begin{equation}
V (X, Y)= \mathbb{E}[\kappa_{\sigma} (X, Y)]=\int \kappa_{\sigma} (x, y)dF_{XY} (x, y)\:,
\end{equation}
where $\kappa_\sigma$ is a shift-invariant Mercer kernel with kernel width $\sigma$, $F _{XY} (x, y)$ denotes the joint probability distribution function of $X$ and $Y$, and $\mathbb{E}[.]$ is the expectation operator.
Given a finite number of samples $ \{x_i, y_i \} _{i=1}^N$, and using the Gaussian kernel, $G_\sigma(x)=\exp (-\frac{x^2}{2\sigma^2})$, as the kernel function, the correntropy can be approximated by
\begin{equation}
\hat{V} (X, Y)= \frac{1}{N} \sum_{i=1}^N \exp (-\frac{e_i^2}{2\sigma^2})\:,
\end{equation}
where $e_i=x_i-y_i$.
\par
Compared with the $\ell_2$-norm based second-order statistic of the error, the correntropy involves all the even moments of the difference between $X$ and $Y$ and is insensitive to outliers. Replacing the second-order measure with the correntropy measure leads to the maximum correntropy criterion (MCC) \cite{Singh2009Using}. The MCC solution is obtained by maximizing the following utility function
\begin{equation}
\label{MCC}
{J_{mcc}} = \mathbb{E}\left[ {{G_{\sigma}\! }\left ( {{e (i)}}\right)}\right]\:.
\end{equation}
Moreover, in practice, the MCC can also be formulated as minimizing the following correntropy-induced loss (C-loss) function \cite{singh2014c}
\begin{equation}
\label{eq:C-loss}
J_{C-loss}= \frac{1}{M}\sum_{i=1}^M\sigma^2\left(1-{{G_{\sigma}\! }\left ( {{e (i)}} \right)}\right)\:.
\end{equation}
The cost function above is closely related to Welsch's cost function, originally introduced in \cite{dennis1978techniques}.

{Fig.~\ref{fig:correntropy} illustrates the different error measures. As can be seen, the correntropy-based error measure can efficiently reduce the effect of a large error $e$ caused by large outliers. Compared with $\ell_1$-norm-based error, it is also differentiable at $0$, which is convenient for optimization and allows us to leverage an HQ technique to efficiently solve the problem. The superior performance of correntropy over $\ell_1$ and $\ell_2$ norm has been verified in many fields \cite{he2019robust,zheng2020broad}, and is also verified in this work.}

\begin{figure}[ht]
\centering
\includegraphics[width=0.5\textwidth]{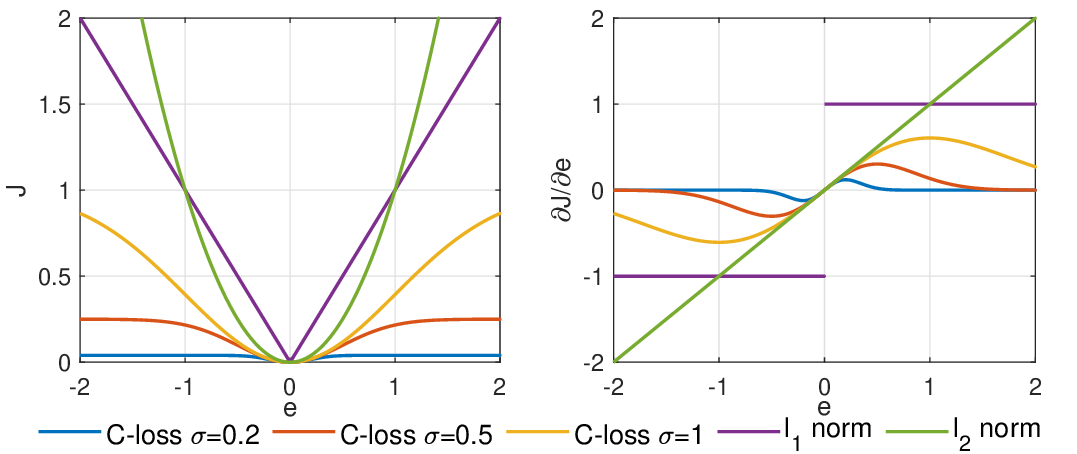}
\caption{Curves of different error measures with error $e$. Left: Cost function $J$ with $e$. Right: Derivative $\partial J/\partial e$ with respect to $e$.}
\label{fig:correntropy}
\end{figure}

\section{Proposed Methods}
\label{sec:methods}
\subsection{Correntropy-based Tensor Completion}

Before we state our objective function for tensor completion, we first rewrite \eqref{eq:TF} as
\begin{equation}
\min_{{\boldsymbol{\mathcal{X}}, \boldsymbol{\mathcal{Y}}}} J({\boldsymbol{\mathcal{X}}, \boldsymbol{\mathcal{Y}}}):=\sum\limits_{i = 1}^{n_1} \sum\limits_{j = 1}^{n_2} \sum\limits_{k = 1}^{n_3}{{{{\cal{P}} _{ijk}}{{\left( {{{\cal{M}}_{ijk}} - {\left( {{\boldsymbol{\cal{X}}*\boldsymbol{{\cal{Y}}} }}\right)_{ijk}}} \right)}^2}} }\:.
\end{equation}
When the observed entries ${\cal{M}}_{ijk}$ are corrupted or contain large outliers, the $\ell_2$ error measure can bias the optimization, which degrades the performance of tensor completion. To enhance robustness, in this work we utilize the correntropy as the error measure. By replacing the $\ell_2$ error measure with correntropy, we obtain the new optimization problem
\begin{equation}
\label{eq:GTF}
\begin{aligned}
&\min_{{\boldsymbol{\mathcal{X}}, \boldsymbol{\mathcal{Y}}}} J_{G_{\sigma}}({\boldsymbol{\mathcal{X}}, \boldsymbol{\mathcal{Y}}})\\
&:=\!\sum\limits_{i = 1}^{n_1} \sum\limits_{j = 1}^{n_2} \sum\limits_{k = 1}^{n_3} \!{{\cal{P}}_{ijk}\sigma^2\left({1-G_{\sigma}{{\left( {{{\cal{M}}_{ijk}} - {\left( {{\boldsymbol{\cal{X}}*\boldsymbol{{\cal{Y}}} }}\right)_{ijk}}} \right)}}} \right)}
\end{aligned}
\end{equation}
The formulation in \eqref{eq:GTF} 
generalizes the correntropy-based formulation in \cite{he2019robust} for matrix completion. In particular, for the special case where $n_3=1$, the optimization in \eqref{eq:GTF} reduces to the correntropy-based matrix completion. Surely, since tensor algebra is substantially different from the algebra of matrices (even the definition of tensor rank is not unique), the solution in \cite{he2019robust} is no longer suitable for tensor completion, a fact which will also be verified in Section IV. Thus, here we seek new approaches to solve \eqref{eq:GTF}.

\subsection{Optimization via Half-quadratic Minimization}
\label{sec:HC}
In general, \eqref{eq:GTF} is non-convex and is difficult to be directly optimized. To tackle this difficulty, we utilize the half-quadratic (HQ) optimization technique to optimize the correntropy-based cost function. According to the half-quadratic optimization theory \cite{nikolova2005analysis}, 
there exists a convex conjugated function $\varphi$ such that
\begin{equation}
\label{eq:HQ1}
G _\sigma (e)=\max_{t}\left(\frac{e^2t}{\sigma^2}-\varphi(t)\right)\:,
\end{equation}
where $t\in\mathbb{R}$ and the maximum is reached at $t=-G _\sigma (e)$. Eq. \eqref{eq:HQ1} can be rewritten as
\begin{equation}
\sigma^2(1-G _\sigma (e))=\min_{t}\left(-e^2t+\sigma^2\varphi(t)\right)\:.
\end{equation}
By defining $s=-t$ and $\phi(s)=\sigma^2\varphi(-s)$, the above equation can be written as
\begin{equation}
\label{eq:HQ2}
\min_{e}\sigma^2(1-G _\sigma (e))=\min_{e,s}\left(e^2s+\phi(s)\right)\:.
\end{equation}
Thus, minimizing the non-convex C-loss function in terms of $e$ is equivalent to minimizing an augmented cost function in an enlarged parameter space $\{e,s\}$.
Therefore, by substituting \eqref{eq:HQ2} in \eqref{eq:GTF}, the correntropy-based objective function $J_{G_{\sigma}}({\boldsymbol{\cal{X}},\boldsymbol{\cal{Y}}})$ can be expressed as
\begin{align}
\begin{aligned}
J_{G_{\sigma}}\!({\boldsymbol{\mathcal{X}}, \boldsymbol{\mathcal{Y}}})\!=\mathop {\min}\limits_{\boldsymbol{\cal{W}}} \sum\limits_{i = 1}^{n_1}\! \sum\limits_{j = 1}^{n_2} \!\sum\limits_{k = 1}^{n_3}\! &
\Bigg({{{{\cal{W}}_{ijk}}{{\cal{P}} _{ijk}}{{\left({\cal{M}}_{ijk}\! -\! \left( \boldsymbol{{\cal{X}}}\!*\!\boldsymbol{{\cal{Y}}} \right)_{ijk} \right)}^2}} }
\\
& 
+ {{\cal{P}} _{ijk}}  \phi \left( {{{\cal{W}}_{ijk}}} \right)\Bigg)
\end{aligned}
\end{align}
Further, by defining the augmented cost function
\begin{equation}
\label{eq:HQ3}
J_{HQ}({\boldsymbol{\cal{X}},\boldsymbol{\cal{Y}}},\boldsymbol{\cal{W}})\!=\!{\|\sqrt{\boldsymbol{\cal{W}}} \circ \boldsymbol{\cal{P}} \circ \left(\boldsymbol{\cal{M}}\!-\!\boldsymbol{\cal{X}}*\boldsymbol{\cal{Y}}\right)\|_{F}^2}+\psi_{\boldsymbol{\Omega}} \left( {\boldsymbol{\cal{W}}}\right)
\end{equation}
where $\psi_{\boldsymbol{\Omega}} \left( {\boldsymbol{\cal{W}}}\right)=\sum\nolimits_{i = 1}^{n_1} \sum\nolimits_{j = 1}^{n_2} \sum\nolimits_{k = 1}^{n_3}{{\cal{P}} _{ijk}}\phi \left( {{{\cal{W}}_{ijk}}} \right)$, we have the following relation
\begin{equation}
\label{HQG}
\min_{\boldsymbol{\cal{X}},\boldsymbol{\cal{Y}}}J_{G_{\sigma}}({\boldsymbol{\cal{X}},\boldsymbol{\cal{Y}}})=\min_{\boldsymbol{\cal{X}},\boldsymbol{\cal{Y}},\boldsymbol{\cal{W}}}J_{HQ}({\boldsymbol{\cal{X}},\boldsymbol{\cal{Y}},\boldsymbol{\cal{W}}})\:.
\end{equation}
Therefore, the correntropy-based optimization problem is formulated as a half-quadratic based optimization.

We propose the following alternating minimization procedure to solve the optimization problem \eqref{eq:HQ3}:

\subsubsection{Optimizing $\boldsymbol{\cal{W}}$}

According to \eqref{eq:HQ1} and \eqref{eq:HQ2}, given a certain $e$, the minimum is reached at $s=G_{\sigma}(e)$. Therefore, given the fixed $\boldsymbol{\cal{X}}$ and $\boldsymbol{\cal{Y}}$, the optimal solutions of ${\cal{W}}_{ijk}$ for $(i,j,k)\in \boldsymbol{\Omega}$ can be obtained as
\begin{equation}
\label{eq:HQW}
{\cal{W}}_{ijk}=G_{\sigma}{{\left( {{{\cal{M}}_{ijk}} - {\left( {{\boldsymbol{\cal{X}}*\boldsymbol{{\cal{Y}}} }}\right)_{ijk}}} \right)}}, (i,j,k)\in\boldsymbol{\Omega}\:.
\end{equation}
Since computing ${\cal{W}}_{ijk}$ for $(i,j,k)\notin\boldsymbol{\Omega}$ does not affect the solution of \eqref{eq:GTF} due to the multiplication with $\boldsymbol{\cal{P}}$, henceforth 
we use ${\cal{W}}_{ijk}$ for all the entries to simplify the expressions.

\subsubsection{Optimizing $\boldsymbol{\cal{X}}$ and $\boldsymbol{\cal{Y}}$}

Given a fixed $\boldsymbol{\cal{W}}$, \eqref{eq:HQ3} becomes a weighted tensor completion problem
\begin{equation}
\label{eq:WTF}
\min_{\boldsymbol{{\cal{X}}},\boldsymbol{\cal{Y}}}{\|\sqrt{\boldsymbol{\cal{W}}} \circ \boldsymbol{{\cal{P}}} \circ \left(\boldsymbol{\cal{M}}-\boldsymbol{\cal{X}*\cal{Y}}\right)\|_{F}^2}\:.
\end{equation}
The weighting tensor $\boldsymbol{\cal{W}}$ assigns different weights to each observed entry based on error residuals. Given the nature of the Gaussian function, a large error will lead to a small weight, such that the negative impact of large outliers for error statistics can be greatly reduced. In the following, we propose and develop two algorithms to solve \eqref{eq:WTF}.

\subsection{Alternating Minimization-based Algorithm}
\label{sec:HQ-TCTF}
Inspired by TCTF \cite{zhou2017tensor}, we first propose an alternating minimization-based approach to solve \eqref{eq:WTF}. By introducing an auxiliary tensor variable $\boldsymbol{\cal{Z}}$, \eqref{eq:WTF} can be rewritten as
\begin{equation}
\begin{aligned}
\label{eq:TCTF_J}
\min_{\boldsymbol{{\cal{X}}},\boldsymbol{\cal{Y}},\boldsymbol{\cal{Z}}} J({\boldsymbol{{\cal{X}}},\boldsymbol{\cal{Y}},\boldsymbol{\cal{Z}}}):=&\|\sqrt{\boldsymbol{\cal{W}}} \circ \boldsymbol{\cal{P}} \circ \left(\boldsymbol{\cal{M}}-\boldsymbol{\cal{Z}}\right)\|_{F}^2\\
&+\beta\|\boldsymbol{\cal{X}*\cal{Y}}-\boldsymbol{\cal{Z}}\|_{F}^2\:.
\end{aligned}
\end{equation}
where $\beta$ is the regularization parameter. To solve \eqref{eq:TCTF_J}, one can again utilize alternating minimization and update $\boldsymbol{\cal{Z}}$, $\boldsymbol{\cal{X}}$ and $\boldsymbol{\cal{Y}}$ in turn. Specifically, by fixing $\boldsymbol{\cal{X}}$ and $\boldsymbol{\cal{Y}}$, we can update $\boldsymbol{\cal{Z}}$ as
\begin{equation}
\label{eq:TCTF_Z}
\boldsymbol{\cal{Z}}=\arg\min_{\boldsymbol{\cal{Z}}}{\|\sqrt{\boldsymbol{\cal{W}}} \circ \boldsymbol{\cal{P}} \circ \left(\boldsymbol{\cal{M}}-\boldsymbol{\cal{Z}}\right)\|_{F}^2+\beta\|\boldsymbol{\cal{X}*\cal{Y}}-\boldsymbol{\cal{Z}}\|_{F}^2}
\end{equation}
To solve \eqref{eq:TCTF_Z}, we set the first derivative of  $J({\boldsymbol{{\cal{X}}},\boldsymbol{\cal{Y}},\boldsymbol{\cal{Z}}})$ with respect to $\boldsymbol{\cal{Z}}$ to zero, i.e.,
\begin{equation}
\label{eq:TCTF_pZ}
\frac{\partial J}{\partial {\boldsymbol{\cal{Z}}}}=2\left( \boldsymbol{\cal{W}} \circ \boldsymbol{\cal{P}} \circ \left(\boldsymbol{\cal{Z}}-\boldsymbol{\cal{M}}\right)+\beta\boldsymbol{\cal{Z}}-\beta\boldsymbol{\cal{X}}*\boldsymbol{\cal{Y}}\right)=\boldsymbol{0}
\end{equation}
Eq. \eqref{eq:TCTF_pZ} is equivalent to the requirement that
\begin{equation}
\left.\begin{array}{cl}
\boldsymbol{\cal{P}} \circ \left(\boldsymbol{\cal{W}} \circ\boldsymbol{\cal{Z}}-\boldsymbol{\cal{W}} \circ\boldsymbol{\cal{M}}+\beta\boldsymbol{\cal{Z}}-\beta\boldsymbol{\cal{X}}*\boldsymbol{\cal{Y}}\right)=\boldsymbol{0} \\
(\boldsymbol{{1}}-\boldsymbol{\cal{P}})\circ(\boldsymbol{\cal{Z}}-\boldsymbol{\cal{X}}*\boldsymbol{\cal{Y}})=\boldsymbol{0}
\end{array}\right.
\end{equation}
Thus, $\boldsymbol{\cal{Z}}$ can be obtained in closed-form as
\begin{equation}
\begin{aligned}
\boldsymbol{\cal{Z}}&=\boldsymbol{\cal{P}}\circ\boldsymbol{\cal{Z}}+(\boldsymbol{{1}}-\boldsymbol{\cal{P}})\circ\boldsymbol{\cal{Z}}\\
&=\boldsymbol{\cal{X}}*\boldsymbol{\cal{Y}}+\frac{\boldsymbol{\cal{W}}}{\beta\boldsymbol{1}+\boldsymbol{\cal{W}}}\circ\boldsymbol{\cal{P}}\circ(\boldsymbol{\cal{M}}-\boldsymbol{\cal{X}}*\boldsymbol{\cal{Y}})
\end{aligned}
\label{eq:TCTF_upZ}
\end{equation}
where $\boldsymbol{1}$ denotes the tensor of all ones, and the division is element-wise. 
Further, by fixing $\boldsymbol{\cal{Z}}$, \eqref{eq:TCTF_J} reduces to the following minimization: 
\begin{equation}
\min_{\boldsymbol{{\cal{X}}},\boldsymbol{\cal{Y}}}{\|\boldsymbol{\cal{X}*\cal{Y}}-\boldsymbol{\cal{Z}}\|_{F}^2}\:.
\end{equation}
According to Lemma \ref{lem:t-product}, we have
\begin{equation}
{\|\boldsymbol{\cal{X}*\cal{Y}}-\boldsymbol{\cal{Z}}\|_{F}^2}=\frac{1}{n_3}{\|\boldsymbol{\bar{X}}\boldsymbol{\bar{Y}}-\boldsymbol{\bar{Z}}\|_{F}^2}\:.
\end{equation}
Given the block structure of $\boldsymbol{\bar{X}}$, $\boldsymbol{\bar{Y}}$ and $\boldsymbol{\bar{Z}}$, the above minimization problem is equivalent to solving the $n_3$ subproblems
\begin{equation}
\min_{\boldsymbol{\bar{X}}^{(k)},\boldsymbol{\bar{Y}}^{(k)}}{\|\boldsymbol{\bar{X}}^{(k)}\boldsymbol{\bar{Y}}^{(k)}-\boldsymbol{\bar{Z}}^{(k)}\|_{F}^2},k=1,\ldots,n_3\:.
\end{equation}
For each $k$, we can alternate between least-squares solutions to $\boldsymbol{\bar{X}}^{(k)}$ and $\boldsymbol{\bar{Y}}^{(k)}$, i.e., 
\begin{equation}
\begin{aligned}
\label{eq:TCTF_XY}
\boldsymbol{\bar{X}}^{(k)}&={\boldsymbol{\bar{Z}}}^{(k)}\left(\bar{\boldsymbol{Y}}^{(k)}\right)^{*}\left(\bar{\boldsymbol{Y}}^{(k)}\left(\bar{\boldsymbol{Y}}^{(k)}\right)^{*}\right)^{\dagger}\\
\boldsymbol{\bar{Y}}^{(k)}&=\left(\bar{\boldsymbol{Y}}^{(k)}\left(\bar{\boldsymbol{Y}}^{(k)}\right)^{*}\right)^{\dagger}\left(\bar{\boldsymbol{Y}}^{(k)}\right)^{*}{\boldsymbol{\bar{Z}}}^{(k)}
\end{aligned}
\end{equation}
where $\boldsymbol{A}^{\dagger}$ denotes the Moore-Penrose pseudo-inverse of matrix $\boldsymbol{A}$.  Therefore, to solve \eqref{eq:HQ3}, we alternate between the updates in \eqref{eq:HQW}, \eqref{eq:TCTF_upZ} and \eqref{eq:TCTF_XY} until convergence. We name this algorithm `Half-Quadratic based Tensor Completion by Tensor Factorization' (HQ-TCTF). The  pseudocode of HQ-TCTF is summarized in Algorithm 1. Note that in step 3 of the algorithm we use an adaptive kernel width to enhance the rate of convergence. More details about this strategy are discussed in Section \ref{sec:stopping}.

Note that the $n_3$ subproblems in each alternating minimization step are independent of each other. Thus, the solution to these subproblems can be parallelized to further speed up computation. 

\begin{algorithm}
\caption{HQ-TCTF for robust tensor completion}
\begin{algorithmic}[1]
 \REQUIRE $\boldsymbol{\cal{P}}$, $\boldsymbol{\cal{P}}\circ\boldsymbol{\cal{M}}$, $\beta$ and $r$
 \STATE initial tensors $\boldsymbol{\cal{X}}^0$ and $\boldsymbol{\cal{Y}}^0$, $t=0$\\
 \REPEAT
 \STATE compute $\sigma^{t+1}$ and $\boldsymbol{\cal{W}}^{t+1}$ using \eqref{eq:HQW}.
 \STATE compute $\boldsymbol{\cal{Z}}^{t+1}$ using \eqref{eq:TCTF_Z}.
 \FOR {$k=1,...,n_3$}
 \STATE compute $\boldsymbol{\cal{X}}^{(k),t+1}$ and $\boldsymbol{\cal{Y}}^{(k),t+1}$ using \eqref{eq:TCTF_XY}
 \ENDFOR
 \STATE $t=t+1$
 \UNTIL stopping criterion is satisfied

\ENSURE $\boldsymbol{\boldsymbol{\cal{X}}}^{t}*{\boldsymbol{\cal{Y}}}^{t}$
\end{algorithmic}
\label{alg:HQ-TCTF}
\end{algorithm}

\begin{remark}
One can observe that as $\sigma\rightarrow\infty$, $G_{\sigma}(e)$ approaches $1$, thus all the entries of $\boldsymbol{\cal{W}}$ become $1$. In this special case, one does not need to optimize $\boldsymbol{\cal{W}}$ in \eqref{eq:HQW}, and \eqref{eq:HQ3} reduces to
\begin{equation}
\min_{\boldsymbol{{\cal{X}}},\boldsymbol{\cal{Y}}}\|{\boldsymbol{{\cal{P}}} \circ \left(\boldsymbol{\cal{M}}-\boldsymbol{\cal{X}*\cal{Y}}\right)\|_{F}^2}
\:,
\end{equation}
which is the tensor completion problem in \eqref{eq:TF}. Further, by setting $\beta=0$ in \eqref{eq:TCTF_upZ}, the updates of $\boldsymbol{\cal{Z}}$, $\boldsymbol{\cal{X}}$ and $\boldsymbol{\cal{Y}}$ will be the same as in TCTF.
\end{remark}

\begin{remark}
The adaptive tubal rank estimation method developed for TCTF \cite{zhou2017tensor} can be naturally applied to HQ-TCTF. Specifically, the scalar rank parameter $r$ in Algorithm 1 can be replaced with a multi-rank vector $\boldsymbol{r}=[r_1,\ldots,r_{n_3}]$
and the adaptive approach in \cite{xu2015parallel,zhou2017tensor} iteratively estimates the rank of the tensor.  
\end{remark}

The following proposition establishes convergence guarantees for HQ-TCTF. 
\begin{proposition} Define the cost function
\begin{equation}
\begin{aligned}
J({\boldsymbol{\cal{X}},\boldsymbol{\cal{Y}}},\boldsymbol{\cal{Z}},\boldsymbol{\cal{W}})=&{\|\sqrt{\boldsymbol{\cal{W}}} \circ \boldsymbol{\cal{P}} \circ \left(\boldsymbol{\cal{M}}\!-\!\boldsymbol{\cal{X}}*\boldsymbol{\cal{Y}}\right)\|_{F}^2}\\
&+\!\|\boldsymbol{\cal{X}*\cal{Y}}\!-\!\boldsymbol{\cal{Z}}\|_{F}^2+\psi_{\boldsymbol{\Omega}} \left( {\boldsymbol{\cal{W}}}\right)\:.
\end{aligned}
\end{equation}

The sequence $\{J_{\sigma^t}({\boldsymbol{\cal{X}}^t,\boldsymbol{\cal{Y}}}^t,\boldsymbol{\cal{Z}}^t,\boldsymbol{\cal{W}}^t),t=1,2,\ldots\}$ generated by Algorithm \ref{alg:HQ-TCTF} converges.
\end{proposition}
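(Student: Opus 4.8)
The plan is to prove convergence of the real-valued sequence $\{J_{\sigma^t}(\boldsymbol{\mathcal{X}}^t,\boldsymbol{\mathcal{Y}}^t,\boldsymbol{\mathcal{Z}}^t,\boldsymbol{\mathcal{W}}^t)\}$ by the classical recipe: show that the value is monotonically non-increasing and bounded below, and then invoke the monotone convergence theorem for sequences in $\mathbb{R}$. Concretely, I would regard one pass of Algorithm \ref{alg:HQ-TCTF} as a sequence of three conditional minimizations over the blocks $\boldsymbol{\mathcal{W}}$, $\boldsymbol{\mathcal{Z}}$, and $(\boldsymbol{\mathcal{X}},\boldsymbol{\mathcal{Y}})$, and argue that, at a fixed kernel width, none of these can increase $J$.

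First I would establish the per-block monotonicity for a \emph{fixed} $\sigma$. For the weight block, fixing $\boldsymbol{\mathcal{X}},\boldsymbol{\mathcal{Y}},\boldsymbol{\mathcal{Z}}$, the half-quadratic identity \eqref{eq:HQ2} shows that for each observed index the scalar map $w\mapsto w\,e^2+\phi(w)$ is minimized exactly at $w=G_\sigma(e)$, which is the choice made in \eqref{eq:HQW}; hence the update of $\boldsymbol{\mathcal{W}}$ does not increase $J$. For the auxiliary block, fixing $\boldsymbol{\mathcal{W}},\boldsymbol{\mathcal{X}},\boldsymbol{\mathcal{Y}}$, the $\boldsymbol{\mathcal{Z}}$-subproblem is a strictly convex quadratic whose unique global minimizer is the closed form \eqref{eq:TCTF_upZ}, so $J$ again cannot increase. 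For the factor block I would pass to the Fourier domain via Lemma \ref{lem:t-product}, so that $\|\boldsymbol{\mathcal{X}}*\boldsymbol{\mathcal{Y}}-\boldsymbol{\mathcal{Z}}\|_F^2=\tfrac{1}{n_3}\sum_k\|\bar{\boldsymbol{X}}^{(k)}\bar{\boldsymbol{Y}}^{(k)}-\bar{\boldsymbol{Z}}^{(k)}\|_F^2$ decouples into $n_3$ independent matrix least-squares problems, each solved exactly by \eqref{eq:TCTF_XY}. Chaining the three inequalities yields one-step monotonicity of $J_\sigma$ at fixed $\sigma$.

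For boundedness below I would show $J\ge 0$. The splitting penalty $\|\boldsymbol{\mathcal{X}}*\boldsymbol{\mathcal{Y}}-\boldsymbol{\mathcal{Z}}\|_F^2$ is manifestly nonnegative, and, by the half-quadratic identity \eqref{eq:HQ2}, the data term together with $\boldsymbol{\mathcal{P}}\circ\phi(\boldsymbol{\mathcal{W}})$ is bounded below entrywise by the correntropy-induced loss $\mathcal{P}_{ijk}\sigma^2(1-G_\sigma(e_{ijk}))$, which lies in $[0,\sigma^2)$ since $0<G_\sigma\le 1$. Summing over indices gives $J\ge 0$, so a non-increasing sequence that is bounded below converges.

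The hard part is the adaptive kernel width of Step~3 of Algorithm \ref{alg:HQ-TCTF}, which is precisely why the objective carries the explicit iteration-dependent subscript $J_{\sigma^t}$: changing $\sigma$ between iterations alters the very function being minimized, so the weight value produced at $\sigma^{t+1}$ cannot be compared directly against the value recorded at $\sigma^{t}$, and the naive monotonicity chain above is broken. To close this gap I would appeal to the specific adaptive rule, arguing that once the residual statistics settle the rule freezes $\sigma^t\equiv\sigma^\star$ after finitely many iterations; the tail of the sequence is then a fixed-$\sigma^\star$ block-coordinate descent to which the monotone-plus-bounded argument applies verbatim, and since each $J_{\sigma^t}$ is uniformly bounded below by $0$ throughout the initial transient, the whole value sequence converges. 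A secondary delicate point, which I would verify carefully rather than gloss over, is that the factor step optimizes only the splitting term whereas the weight step uses the product residual $\boldsymbol{\mathcal{M}}-\boldsymbol{\mathcal{X}}*\boldsymbol{\mathcal{Y}}$; one must confirm that the objective used to certify descent is consistent across all three updates so that the composite one-iteration map is genuinely non-increasing.
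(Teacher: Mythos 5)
Your proposal follows essentially the same route as the paper's proof: one sweep of Algorithm~\ref{alg:HQ-TCTF} is read as block-coordinate descent over $\boldsymbol{\cal{W}}$, $\boldsymbol{\cal{Z}}$, and $(\boldsymbol{\cal{X}},\boldsymbol{\cal{Y}})$, each block update is shown not to increase $J$, and convergence follows from monotonicity plus the bound $J\ge 0$. Two small differences in the factor step: the paper certifies the non-increase of $\|\bar{\boldsymbol{X}}^{(k)}\bar{\boldsymbol{Y}}^{(k)}-\bar{\boldsymbol{Z}}^{(k)}\|_F^2$ by citing Lemma~3 of the TCTF supplementary material (the alternating least-squares pair in \eqref{eq:TCTF_XY} is two exact single-block solves, not an exact joint minimizer, so ``each solved exactly'' is slightly overstated on your part, though the non-increase conclusion is the same), and it then lifts the per-slice inequalities back to the tensor via Lemma~\ref{lem:t-product} exactly as you do.

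Where your write-up genuinely diverges is that you explicitly flag the two points the paper's proof passes over in silence, and neither of them is closed on either side. First, the paper states the proposition for $J_{\sigma^t}$ but the proof drops the subscript and chains inequalities as if $\sigma$ were fixed; your observation that changing $\sigma$ changes the function being decreased is correct, but your repair --- that the rule \eqref{sigmaout} freezes $\sigma^t$ after finitely many iterations --- is asserted, not proven: \eqref{sigmaout} only guarantees $\sigma^t\ge\sigma_{\min}$, not eventual constancy, so this step remains a gap in your argument (one the paper avoids only by ignoring the issue). Second, your ``secondary delicate point'' is also real: the stated $J$ carries the data term through $\boldsymbol{\cal{M}}-\boldsymbol{\cal{X}}*\boldsymbol{\cal{Y}}$ while the $\boldsymbol{\cal{Z}}$-update \eqref{eq:TCTF_upZ} is the minimizer of the \emph{different} objective \eqref{eq:TCTF_J} (which measures $\boldsymbol{\cal{M}}-\boldsymbol{\cal{Z}}$), and the factor update decreases only the splitting term; the paper's inequality \eqref{eq:TCTF_p1_J} simply asserts optimality of $\boldsymbol{\cal{W}}$ and $\boldsymbol{\cal{Z}}$ for the stated $J$ without reconciling this mismatch. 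So your proposal is, if anything, a more honest account of what must be verified, but to count as a complete proof you would need to either establish the eventual freezing of $\sigma^t$ (or restrict the claim to fixed $\sigma$) and exhibit a single Lyapunov function for which all three block updates are provably non-increasing.
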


\begin{proof} 
Since $\boldsymbol{\cal{W}}$ and $\boldsymbol{\cal{Z}}$ are optimal solutions to \eqref{eq:HQW} and \eqref{eq:TCTF_Z}, respectively, we have
\begin{equation}
\label{eq:TCTF_p1_J}
J({\boldsymbol{\cal{X}}^{t+1},\boldsymbol{\cal{Y}}}^{t+1},\boldsymbol{\cal{Z}}^{t+1},\boldsymbol{\cal{W}}^{t+1})\leq J({\boldsymbol{\cal{X}}^{t+1},\boldsymbol{\cal{Y}}}^{t+1},\boldsymbol{\cal{Z}}^{t},\boldsymbol{\cal{W}}^{t}) \:.
\end{equation}
Then, from Lemma 3 in the supplementary material of \cite{zhou2017tensor}, one can obtain that for each matrix
${\boldsymbol{\bar{X}}^{(k)},\boldsymbol{\bar{Y}}^{(k)}},k=1,...,n_3$ generated from \eqref{eq:TCTF_XY}, the following inequality holds
\begin{equation}
{\|\boldsymbol{\bar{X}}^{(k),t+1}\boldsymbol{\bar{Y}}^{(k),t+1}-\boldsymbol{\bar{Z}}\|_{F}^2}\leq{\|\boldsymbol{\bar{X}}^{(k),t}\boldsymbol{\bar{Y}}^{(k),t}-\boldsymbol{\bar{Z}}\|_{F}^2}
\end{equation}
From Lemma \ref{lem:t-product}, we have ${\|\boldsymbol{\cal{X}}^{t}*\boldsymbol{\cal{Y}}^{t}-\boldsymbol{\cal{Z}}\|_{F}^2}=\frac{1}{n_3}\sum_{k=1}^{n_3}{\|\boldsymbol{\bar{X}}^{(k),t}\boldsymbol{\bar{Y}}^{(k),t}-\boldsymbol{\bar{Z}}\|_{F}^2}$. Thus the following inequality holds
\begin{equation}
\label{eq:TCTF_p1_XY}
{\|\boldsymbol{\cal{X}}^{t+1}*\boldsymbol{\cal{Y}}^{t+1}-\boldsymbol{\cal{Z}}\|_{F}^2}\leq{\|\boldsymbol{\cal{X}}^{t}*\boldsymbol{\cal{Y}}^{t}-\boldsymbol{\cal{Z}}\|_{F}^2}
\end{equation}
Combining \eqref{eq:TCTF_p1_J} and \eqref{eq:TCTF_p1_XY} we have
\begin{equation}
J({\boldsymbol{\cal{X}}^{t+1},\boldsymbol{\cal{Y}}}^{t+1},\boldsymbol{\cal{Z}}^{t+1},\boldsymbol{\cal{W}}^{t+1})\leq J({\boldsymbol{\cal{X}}^{t},\boldsymbol{\cal{Y}}}^{t},\boldsymbol{\cal{Z}}^{t},\boldsymbol{\cal{W}}^{t})
\end{equation}
It can be also verified that $J({\boldsymbol{\cal{X}}^{t},\boldsymbol{\cal{Y}}}^{t},\boldsymbol{\cal{Z}}^{t},\boldsymbol{\cal{W}}^{t}) $ is always bounded below for arbitrary $t$. Thus, $\{J({\boldsymbol{\cal{X}}^{t},\boldsymbol{\cal{Y}}}^{t},\boldsymbol{\cal{Z}}^{t},\boldsymbol{\cal{W}}^{t}),t=1,2,...\}$ will converge.
\end{proof}

\subsection{Alternating Steepest Descent-based Algorithm}
\label{sec:HQ-TCASD}
In the context of matrix completion, alternating steepest descent (ASD) was introduced to efficiently solve the completion problem \cite{tanner2016low}. ASD has a lower per-iteration complexity than PowerFactorization, and can recover high rank matrices. In this section, we introduce the ASD method for tensor completion and develop an efficient robust tensor completion algorithm.

As mentioned in Section \ref{sec:HC}, we first optimize $\boldsymbol{\cal{W}}$ using \eqref{eq:HQW}. Then, instead of directly optimizing \eqref{eq:WTF}, we gradually update $\boldsymbol{\cal{X}}$ and $\boldsymbol{\cal{Y}}$ using gradient descent. For convenience, we first add a multiplicative factor of $\frac{1}{2}$ to \eqref{eq:WTF} such that the minimization problem becomes
\begin{equation}
\frac{1}{2}\min_{\boldsymbol{{\cal{X}}},\boldsymbol{\cal{Y}}}{\|\sqrt{\boldsymbol{\cal{W}}} \circ \boldsymbol{\cal{P}} \circ \left(\boldsymbol{\cal{M}}-\boldsymbol{\cal{X}*\cal{Y}}\right)\|_{F}^2}\:.
\label{eq:ASD_J}
\end{equation}
Then, using the relation \eqref{eq:prelim_1} and Definition \ref{def:tprod} in Section \ref{sec:prelim}, \eqref{eq:ASD_J} can be rewritten as
\begin{equation}
\frac{1}{2}\min_{\boldsymbol{{\cal{X}}},\boldsymbol{\cal{Y}}}{\|\sqrt{\tilde{\boldsymbol{{W}}}} \circ \tilde{\boldsymbol{{P}}} \circ \left(\tilde{\boldsymbol{{M}}}-\operatorname{bcirc}({\boldsymbol{\cal{X}})\tilde{\boldsymbol{{Y}}}}\right)\|_{F}^2}
\label{eq:ASD_J2}\:.
\end{equation}
Based on the block-circulant diagonalization \cite{kilmer2013third}, we have
\begin{equation}
\begin{aligned}
\operatorname{bcirc}({\boldsymbol{\cal{X}})\tilde{\boldsymbol{{Y}}}}&=\left(\boldsymbol{F}_{n_{3}}^{-1} \otimes \boldsymbol{I}_{n_{1}}\right)  \bar{\boldsymbol{X}}{\hat{\boldsymbol{{ Y }}}}\\
&=\boldsymbol{F}^{-1}\bar{\boldsymbol{X}}{\hat{\boldsymbol{{ Y }}}}\\
&=\boldsymbol{U}{\hat{\boldsymbol{{ Y }}}}
\end{aligned}
\end{equation}
where $\boldsymbol{F}^{-1}=\boldsymbol{F}_{n_{3}}^{-1} \otimes \boldsymbol{I}_{n_{1}}$ (consequently $\boldsymbol{F}=\boldsymbol{F}^{-1} \times n_3$), $\boldsymbol{U}=\boldsymbol{F}^{-1}\bar{\boldsymbol{X}}$ and  ${\hat{\boldsymbol{A}}}=\operatorname{unfold}(\bar{\boldsymbol{\cal{A}}})$. Finally, \eqref{eq:ASD_J2} can be reformulated as
\begin{equation}
\min J({\boldsymbol{U},{{\hat{\boldsymbol{Y}}}}}):=\frac{1}{2}{\left\|{\sqrt{\tilde{\boldsymbol{W}}} \circ \tilde{\boldsymbol{P}} \circ \left({\tilde{\boldsymbol{M}}}-{\boldsymbol{U}{\hat{{\boldsymbol{Y}}}}}\right)}\right\|_{F}^2}\:.
\end{equation}
Using the matrix derivatives, the partial derivative of $J({\boldsymbol{U},\hat{\boldsymbol{Y}}})$ with respect to $\boldsymbol{U}$ can be computed as
\begin{equation}
\boldsymbol{g}_{\boldsymbol{U}}=\frac{\partial J}{\partial {\boldsymbol{U} }}=-{\tilde{\boldsymbol{W}}} \circ \tilde{\boldsymbol{P}} \circ  \left(\tilde{{\boldsymbol{M}}}-{\boldsymbol{U}{{\hat{\boldsymbol{Y}}}}}\right){{\hat{\boldsymbol{Y}}}}^* \:.
\end{equation}

Note that ${\bar{\boldsymbol{X}}}=\boldsymbol{F}\boldsymbol{U}$ is a block diagonal matrix. Following the method in  \cite{gilman2020grassmannian}, we force the update of ${\bar{\boldsymbol{X}}}$ at each iteration to be block diagonal. Specifically, by defining the operator $\operatorname{bdiagz}(\cdot)$ which sets the non-block-diagonal entries of a matrix to zero, the updated gradient can be obtained as
\begin{equation}
\boldsymbol{g}'_{\boldsymbol{U}}=\boldsymbol{F}^{-1}\operatorname{bdiagz}(\boldsymbol{F}\boldsymbol{g}_{\boldsymbol{U}})\:.
\label{eq:ASD_gU}
\end{equation}
The steepest descent step size  $\mu'_{\boldsymbol{U}}$ for $\boldsymbol{g}'_{\boldsymbol{U}}$ can be obtained as the following minimizer
\begin{equation}
\begin{aligned}
\mu'_{{{\boldsymbol{U}}}}&=\arg\min_{\mu}{\left\|{\sqrt{\tilde{\boldsymbol{W}}} \circ \tilde{\boldsymbol{P}} \circ \left({\tilde{\boldsymbol{M}}}-({\boldsymbol{U}-\mu\boldsymbol{g}'_{\boldsymbol{U}}){\hat{{\boldsymbol{Y}}}}}\right)}\right\|_{F}^2}\\
&=\frac{\|\boldsymbol{g}'_{{{\boldsymbol{U}}}}\|_F^2}{\left\|\sqrt{\tilde{\boldsymbol{W}}} \circ \tilde{\boldsymbol{P}} \circ\left( {\boldsymbol{g}}'_{{\boldsymbol{U}}}{{\hat{\boldsymbol{Y}}}}\right)\right\|_F^2}
\end{aligned}
\end{equation}
and the matrix ${\boldsymbol{U}}$ can be updated as
\begin{equation}
\label{eq:ASD_U}
\boldsymbol{U}^{t+1}=\boldsymbol{U}^{t}-\mu_{\boldsymbol{U}}'^t{\boldsymbol{g}}'^t_{\boldsymbol{U}}\:.
\end{equation}
Similarly, by fixing $\boldsymbol{U}$, the partial derivative of $\boldsymbol{J}$ w.r.t. $\hat{\boldsymbol{Y}}$ can be obtained as
\begin{equation}
\boldsymbol{g}_{\hat{\boldsymbol{Y}}}=\frac{\partial J}{\partial {\hat{\boldsymbol{Y}}}}=-{\boldsymbol{U}}^*\left({\tilde{\boldsymbol{W}}} \circ \tilde{\boldsymbol{P}} \circ  \left(\tilde{{\boldsymbol{M}}}-{\boldsymbol{U}{{\hat{\boldsymbol{Y}}}}}\right)\right)\:.
\label{eq:ASD_gY}
\end{equation}
The corresponding step size $\mu_{{\hat{\boldsymbol{Y}}}}$ will be
\begin{equation}
\begin{aligned}
{\mu}_{{\hat{\boldsymbol{Y}}}}&=\frac{\|\boldsymbol{g}_{{\hat{\boldsymbol{Y}}}}\|_F^2}{\left\|\sqrt{\tilde{\boldsymbol{W}}} \circ \tilde{\boldsymbol{P}} \circ \left({\boldsymbol{U}}{\boldsymbol{g}}_{{\hat{\boldsymbol{Y}}}}\right)\right\|_F^2}\:.
\end{aligned}
\end{equation}

Similar to ASD, the foregoing update process suffers from a slow rate of  convergence when directly applied to image and video completion tasks. To tackle this problem, following a Newton-like method for Scaled ASD \cite{tanner2016low}, we scale the gradient descent direction for $\hat{\boldsymbol{Y}}$ in \eqref{eq:ASD_gY} by $(\boldsymbol{U}^*\boldsymbol{U})^{-1}$, i.e.,
\begin{equation}
\boldsymbol{g}'_{\hat{\boldsymbol{Y}}}=(\boldsymbol{U}^*\boldsymbol{U})^{-1}\boldsymbol{g}_{\hat{\boldsymbol{Y}}}\:,
\label{eq:ASD_gY2}
\end{equation}
and the corresponding step size $\mu'_{{\hat{\boldsymbol{Y}}}}$ with exact line-search is
\begin{equation}
\begin{aligned}
{\mu}'_{{\hat{\boldsymbol{Y}}}}&=\frac{\langle\boldsymbol{g}_{\hat{\boldsymbol{Y}}},\boldsymbol{g}'_{\hat{\boldsymbol{Y}}}\rangle}{\left\|\sqrt{\tilde{\boldsymbol{W}}} \circ \tilde{\boldsymbol{P}} \circ \left({\boldsymbol{U}}{\boldsymbol{g}}'_{{\hat{\boldsymbol{Y}}}}\right)\right\|_F^2}\:,
\end{aligned}
\end{equation}
where $\langle \boldsymbol{A}, \boldsymbol{B}\rangle := \sum_{1 \leq i, j \leq n} A^*_{ij} B_{ij}$. Therefore, the matrix $\hat{\boldsymbol{Y}}$ at the $t$-th iteration can be updated by combining \eqref{eq:ASD_gY} and \eqref{eq:ASD_gY2}, i.e.,
\begin{equation}
\label{eq:ASD_Y}
{\hat{\boldsymbol{Y}}}^{t+1}={\hat{\boldsymbol{Y}}}^{t}-(1-\lambda)\mu_{{\hat{\boldsymbol{Y}}}}^t\boldsymbol{g}_{{\hat{\boldsymbol{Y}}}}^t-\lambda\mu'^t_{{\hat{\boldsymbol{Y}}}}\boldsymbol{g}'^t_{{\hat{\boldsymbol{Y}}}}\:,
\end{equation}
where $0\leq\lambda\leq1$ is a free parameter to be chosen.

Therefore, the matrices ${\boldsymbol{U}}$ and ${{\hat{\boldsymbol{Y}}}}$ can be alternately updated using \eqref{eq:ASD_U} and \eqref{eq:ASD_Y} until convergence. We term the above algorithm `Half-Quadratic based Tensor Completion by Alternating Steepest Descent' (HQ-TCASD).

Similar to HQ-TCTF, adaptive selection of the kernel width $\sigma$ is used to improve the rate of convergence and the performance of HQ-TCASD. HQ-TCASD is summarized in Algorithm \ref{alg:HQ-TCASD}. We remark that the matrices ${\boldsymbol{U}}(\bar{\boldsymbol{X}})$ and ${{\hat{\boldsymbol{Y}}}}$ have a block structure, so the matrix computation can be processed block-by-block. Also, since we have $\boldsymbol{F}\tilde{\boldsymbol{A}}=\operatorname{unfold(fft}(\boldsymbol{\cal{A}},[\:],3))$ for a tensor $\boldsymbol{\cal{A}}$, the conventional FFT operation can be used in \eqref{eq:ASD_gU} instead of matrix multiplication to further accelerate the computation.

\begin{algorithm}
\caption{HQ-TCASD for robust tensor completion}
\begin{algorithmic}[1]
 \REQUIRE $\boldsymbol{\cal{P}}$, $\boldsymbol{\cal{P}}\circ\boldsymbol{\cal{M}}$, $r$ and $\lambda$
 \STATE initial matrices $\boldsymbol{U}^0$ and $\hat{\boldsymbol{{Y}}}^0$, $t=0$\\
 \REPEAT
 \STATE compute $\sigma^{t+1}$ and $\boldsymbol{\cal{W}}^{t+1}$ using \eqref{eq:HQW}.
 \STATE compute $\boldsymbol{U}^{t+1}$ using \eqref{eq:ASD_U}.
 \STATE compute $\hat{\boldsymbol{Y}}^{t+1}$ using \eqref{eq:ASD_Y}.
 \STATE $t=t+1$
 \UNTIL stopping criterion is satisfied

\ENSURE $\boldsymbol{U}^{t}*{\hat{\boldsymbol{Y}}}^{t}$
\end{algorithmic}
\label{alg:HQ-TCASD}
\end{algorithm}

The following proposition verifies the convergence of the proposed HQ-TCSAD.

\begin{proposition}
Define the cost function
\begin{equation}
\begin{aligned}
J({\boldsymbol{\cal{X}},\boldsymbol{\cal{Y}}},\boldsymbol{\cal{W}})=&\frac{1}{2}{\|\sqrt{\boldsymbol{\cal{W}}} \circ \boldsymbol{\cal{P}} \circ \left(\boldsymbol{\cal{M}}\!-\!\boldsymbol{\cal{X}}*\boldsymbol{\cal{Y}}\right)\|_{F}^2}\\
&+\frac{1}{2}\psi_{\boldsymbol{\Omega}} \left( {\boldsymbol{\cal{W}}}\right)
\end{aligned}
\end{equation}
The sequence $\{J({\boldsymbol{\cal{X}}^t,\boldsymbol{\cal{Y}}}^t,\boldsymbol{\cal{W}}^t),t=1,2,...\}$ generated by Algorithm 2 will converge.
\end{proposition}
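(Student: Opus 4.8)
The plan is to establish that the sequence $\{J(\boldsymbol{\mathcal{X}}^t,\boldsymbol{\mathcal{Y}}^t,\boldsymbol{\mathcal{W}}^t)\}$ is monotonically non-increasing and bounded below, which by the monotone convergence theorem forces it to converge. For a fixed kernel width $\sigma$, each iteration consists of three block updates---the weight update \eqref{eq:HQW}, the $\boldsymbol{U}$ update \eqref{eq:ASD_U}, and the $\hat{\boldsymbol{Y}}$ update \eqref{eq:ASD_Y}---so I would show that none of them increases $J$ and then chain the three inequalities. The key enabling fact is that, through the block-circulant diagonalization of Section \ref{sec:HQ-TCASD}, the data-fitting term of $J$ equals $J(\boldsymbol{U},\hat{\boldsymbol{Y}})=\tfrac12\|\sqrt{\tilde{\boldsymbol{W}}}\circ\tilde{\boldsymbol{P}}\circ(\tilde{\boldsymbol{M}}-\boldsymbol{U}\hat{\boldsymbol{Y}})\|_F^2$, which for fixed $\boldsymbol{\mathcal{W}}$ is a convex quadratic in $\boldsymbol{U}$ (with $\hat{\boldsymbol{Y}}$ fixed) and in $\hat{\boldsymbol{Y}}$ (with $\boldsymbol{U}$ fixed), while the regularizer $\tfrac12\boldsymbol{\mathcal{P}}\circ\phi(\boldsymbol{\mathcal{W}})$ depends only on $\boldsymbol{\mathcal{W}}$.

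First I would treat the weight update. Since $\boldsymbol{\mathcal{W}}^{t+1}$ is the exact minimizer of $J$ over $\boldsymbol{\mathcal{W}}$ for fixed $(\boldsymbol{\mathcal{X}}^t,\boldsymbol{\mathcal{Y}}^t)$---this is precisely the half-quadratic optimality \eqref{eq:HQ2} applied entrywise as in \eqref{eq:HQW}---we obtain $J(\boldsymbol{\mathcal{X}}^t,\boldsymbol{\mathcal{Y}}^t,\boldsymbol{\mathcal{W}}^{t+1})\le J(\boldsymbol{\mathcal{X}}^t,\boldsymbol{\mathcal{Y}}^t,\boldsymbol{\mathcal{W}}^t)$. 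Next, with $\boldsymbol{\mathcal{W}}^{t+1}$ and $\hat{\boldsymbol{Y}}^t$ held fixed, the $\boldsymbol{U}$ update is an exact line search along the block-diagonal-preserving direction $\boldsymbol{g}'_{\boldsymbol{U}}$ of \eqref{eq:ASD_gU}; because $J(\boldsymbol{U},\hat{\boldsymbol{Y}}^t)$ is a convex quadratic in $\boldsymbol{U}$ and the stated step size $\mu'_{\boldsymbol{U}}$ is the exact minimizer along that line, the value at $\mu'_{\boldsymbol{U}}$ cannot exceed the value at $\mu=0$. I would verify the exact-minimizer claim from the identity $\langle\boldsymbol{g}_{\boldsymbol{U}},\boldsymbol{g}'_{\boldsymbol{U}}\rangle=\|\boldsymbol{g}'_{\boldsymbol{U}}\|_F^2$, which follows since the map $\boldsymbol{g}_{\boldsymbol{U}}\mapsto\boldsymbol{g}'_{\boldsymbol{U}}$ is an idempotent, self-adjoint projection (the idempotence of $\operatorname{bdiagz}$). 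Translating back through Lemma \ref{lem:t-product} yields $J(\boldsymbol{\mathcal{X}}^{t+1},\boldsymbol{\mathcal{Y}}^t,\boldsymbol{\mathcal{W}}^{t+1})\le J(\boldsymbol{\mathcal{X}}^t,\boldsymbol{\mathcal{Y}}^t,\boldsymbol{\mathcal{W}}^{t+1})$.

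The main obstacle is the $\hat{\boldsymbol{Y}}$ update, because \eqref{eq:ASD_Y} is not a single line search but a convex combination of two: the unscaled step along $\boldsymbol{g}_{\hat{\boldsymbol{Y}}}$ and the scaled step along $\boldsymbol{g}'_{\hat{\boldsymbol{Y}}}$ of \eqref{eq:ASD_gY2}. My idea is to set $\hat{\boldsymbol{Y}}_1=\hat{\boldsymbol{Y}}^t-\mu_{\hat{\boldsymbol{Y}}}\boldsymbol{g}_{\hat{\boldsymbol{Y}}}$ and $\hat{\boldsymbol{Y}}_2=\hat{\boldsymbol{Y}}^t-\mu'_{\hat{\boldsymbol{Y}}}\boldsymbol{g}'_{\hat{\boldsymbol{Y}}}$, observe that each step size is the exact line-search minimizer along its own direction so that $J(\hat{\boldsymbol{Y}}_i)\le J(\hat{\boldsymbol{Y}}^t)$ for $i=1,2$, and note that \eqref{eq:ASD_Y} is exactly $\hat{\boldsymbol{Y}}^{t+1}=(1-\lambda)\hat{\boldsymbol{Y}}_1+\lambda\hat{\boldsymbol{Y}}_2$ with $0\le\lambda\le1$. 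Convexity of the quadratic $J$ in $\hat{\boldsymbol{Y}}$ then gives $J(\hat{\boldsymbol{Y}}^{t+1})\le(1-\lambda)J(\hat{\boldsymbol{Y}}_1)+\lambda J(\hat{\boldsymbol{Y}}_2)\le J(\hat{\boldsymbol{Y}}^t)$, i.e. $J(\boldsymbol{\mathcal{X}}^{t+1},\boldsymbol{\mathcal{Y}}^{t+1},\boldsymbol{\mathcal{W}}^{t+1})\le J(\boldsymbol{\mathcal{X}}^{t+1},\boldsymbol{\mathcal{Y}}^t,\boldsymbol{\mathcal{W}}^{t+1})$. Chaining the three inequalities establishes monotonicity, $J(\boldsymbol{\mathcal{X}}^{t+1},\boldsymbol{\mathcal{Y}}^{t+1},\boldsymbol{\mathcal{W}}^{t+1})\le J(\boldsymbol{\mathcal{X}}^t,\boldsymbol{\mathcal{Y}}^t,\boldsymbol{\mathcal{W}}^t)$.

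Finally, to close the argument I would show $J$ is bounded below: the data term $\tfrac12\|\sqrt{\boldsymbol{\mathcal{W}}}\circ\boldsymbol{\mathcal{P}}\circ(\cdots)\|_F^2$ is nonnegative, and $\phi$ is bounded below, since with $\phi(s)=\sigma^2\varphi(-s)$ and $s=G_\sigma(e)$ a short computation gives $\phi(G_\sigma(e))/\sigma^2=1-e^{-u}(1+2u)$ where $u=e^2/(2\sigma^2)$, whose infimum over $u\ge0$ is attained at $u=\tfrac12$ and equals $1-2e^{-1/2}>-1$; hence $\boldsymbol{\mathcal{P}}\circ\phi(\boldsymbol{\mathcal{W}})$ is bounded below by a finite constant. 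A monotone, bounded-below real sequence converges. The one delicate point is the adaptive kernel width in step 3: because $\sigma^t$ varies, $J$ itself changes across iterations, so strictly speaking I would either carry out the argument for a fixed $\sigma$, or invoke that the adaptive rule renders $\sigma^t$ eventually constant, after which the monotone argument applies verbatim.
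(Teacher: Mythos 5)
Your proof is correct and shares the paper's overall skeleton: show that each of the three block updates ($\boldsymbol{\cal{W}}$, $\boldsymbol{U}$, $\hat{\boldsymbol{Y}}$) does not increase $J$, chain the inequalities, and invoke boundedness below; you also use the same key identity $\langle\boldsymbol{g}_{\boldsymbol{U}},\boldsymbol{g}'_{\boldsymbol{U}}\rangle=\|\boldsymbol{g}'_{\boldsymbol{U}}\|_F^2$, which as you note comes from $\boldsymbol{g}\mapsto\boldsymbol{F}^{-1}\operatorname{bdiagz}(\boldsymbol{F}\boldsymbol{g})$ being a self-adjoint idempotent. Where you genuinely diverge is the $\hat{\boldsymbol{Y}}$ step. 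The paper expands $J(\boldsymbol{U}^{t+1},\hat{\boldsymbol{Y}}^{t+1})-J(\boldsymbol{U}^{t+1},\hat{\boldsymbol{Y}}^{t})$ directly and asserts it equals the $\lambda$-weighted sum of the two individual exact-line-search decreases; as written, that expansion drops the quadratic cross term between the two descent directions and is exact only for $\lambda\in\{0,1\}$. Your route --- write $\hat{\boldsymbol{Y}}^{t+1}=(1-\lambda)\hat{\boldsymbol{Y}}_1+\lambda\hat{\boldsymbol{Y}}_2$ with $\hat{\boldsymbol{Y}}_1,\hat{\boldsymbol{Y}}_2$ the two exact line-search points, then apply convexity of the quadratic $J(\boldsymbol{U}^{t+1},\cdot)$ --- sidesteps the cross term entirely and yields $J(\hat{\boldsymbol{Y}}^{t+1})\le(1-\lambda)J(\hat{\boldsymbol{Y}}_1)+\lambda J(\hat{\boldsymbol{Y}}_2)\le J(\hat{\boldsymbol{Y}}^{t})$ for every $\lambda\in[0,1]$, so it is the more robust argument. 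You also substantiate two things the paper merely asserts: boundedness below, via the explicit computation $\phi(G_\sigma(e))=\sigma^2\left(1-e^{-u}(1+2u)\right)\ge\sigma^2\left(1-2e^{-1/2}\right)$ with $u=e^2/(2\sigma^2)$, and the caveat that the adaptive kernel width makes the objective itself iteration-dependent, so the monotonicity argument strictly applies only once $\sigma^t$ is held (or becomes) fixed --- a gap present in the paper's own proof as well. Both additions are correct and strengthen the result.
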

\begin{proof}
See Appendix A.
\end{proof}

\begin{remark}
As $\sigma\rightarrow\infty$ (i.e., for the standard tensor completion cost function in \eqref{eq:TF}), one can set $\tilde{\boldsymbol{W}}$ to be the all ones matrix and alternately update \eqref{eq:ASD_U} and \eqref{eq:ASD_Y}. This is itself a new algorithm, which we term TCASD. It can be used for tensor completion in noise-free settings or with Gaussian noise.
\end{remark}
\subsection{Stopping Criterion and Adaptive Kernel Width Selection}
\label{sec:stopping}
The relative error between iterations can be used to measure the speed of convergence and develop a stopping criterion. Specifically, the residual error tensor at the $t$-th iteration $\boldsymbol{\cal{E}}^t$ is defined as
\begin{equation}
\boldsymbol{\mathcal{E}}^t=\sqrt{\boldsymbol{\cal{W}}^t}\circ\boldsymbol{\cal{P}}\circ(\boldsymbol{\mathcal{M}}-\boldsymbol{\mathcal{X}^t}*\boldsymbol{\mathcal{Y}}^t)\:.
\end{equation}
If $\|\boldsymbol{\cal{E}}^{t}\|_F-\|\boldsymbol{\cal{E}}^{t-1}\|_F$ falls below a sufficiently small value $\varepsilon$, the algorithm is considered to have converged to a local minimum, and the iterative procedure terminates.

To further improve performance and achieve a faster rate of convergence, we use an adaptive kernel width selection strategy. Specifically, the kernel width at the $t+1$-th iteration is determined by
\begin{equation}
\label{sigmaout}
\sigma^{t+1}=\max\left(\eta\left(\max({\boldsymbol{e}_{\Omega}^t}_{(0.25)},{\boldsymbol{e}_{\Omega}^t}_{(0.75)})\right),\sigma_{min}\right)
\end{equation}
where $\boldsymbol{e}^{t}_{\Omega}\in\mathbb{R}^{|\Omega|\times 1}$ denotes the vector composed of all non-zero entries of $\boldsymbol{\cal{P}}\circ(\boldsymbol{\mathcal{M}}-\boldsymbol{\mathcal{X}^t}*\boldsymbol{\mathcal{Y}}^t)$, and $\boldsymbol{y}_{(q)}$ denotes the $q$-th quantile of $\boldsymbol{y}$. The parameter $\eta$ controls the kernel width, and $\sigma_{min}$ is a lower bound on $\sigma$.

\subsection{Complexity Analysis}
\label{sec:complexity}
We first present a complexity analysis of HQ-TCTF. Computing $\sigma$ involves computing $\boldsymbol{\cal{X}}*\boldsymbol{\cal{Y}}$ and finding the quantile of $e_{\Omega}$, whose time complexities are ${\cal{O}}(r(n_1+n_2)n_3\log n_3+rn_1n_2n_3)$ and ${\cal{O}}(n_1n_2n_3)$, respectively. The complexity of computing $\boldsymbol{\cal{W}}$ and $\boldsymbol{\cal{Z}}$ are both ${\cal{O}}(n_1n_2n_3)$ since $\boldsymbol{\cal{X}}*\boldsymbol{\cal{Y}}$ was already computed. Then, the cost of updating $\boldsymbol{\cal{X}}$ and $\boldsymbol{\cal{Y}}$ is ${\cal{O}}(r(n_1+n_2)n_3\log n_3+rn_1n_2n_3)$. Therefore, the overall complexity of HQ-TCTF is ${\cal{O}}(r(n_1+n_2)n_3\log n_3+rn_1n_2n_3)$.

For HQ-TCASD, similar to HQ-TCTF, the complexity of computing $\sigma$ is ${\cal{O}}(r(n_1+n_2)n_3\log n_3+rn_1n_2n_3)$. Computing $\boldsymbol{g}'_{\boldsymbol{U}}$ using FFT has complexity ${\cal{O}}(r(n_1+n_2)n_3\log n_3+rn_1n_3\max(n_2,n_3))$, and calculation of $\mu'_{\boldsymbol{U}},\boldsymbol{g}_{\hat{\boldsymbol{Y}}}$ and $\mu_{\hat{\boldsymbol{Y}}}$ is of complexity ${\cal{O}}(r(n_1+n_2)n_3\log n_3+rn_1n_2n_3)$. Therefore, the overall complexity of HQ-TCASD is ${\cal{O}}(r(n_1+n_2)n_3\log n_3+rn_1n_3\max(n_2,n_3))$. 

One can observe that if $n_2 > n_3$, both HQ-TCTF and HQ-TCASD have the same order complexity. {Further, as both HQ-TCTF and HQ-TCASD are SVD-free algorithms and are readily parallelizable, the computation 
can be easily accelerated through parallel computation, which is verified in the experiments.}

\section{Experiments}
\label{sec:results}
In this section, we thoroughly evaluate the performance of the proposed algorithms HQ-TCTF, HQ-TCASD  and TCASD using both synthetic and real data. We compare to existing tensor completion algorithms, including TCTF \cite{zhou2017tensor}, TAM \cite{liu2019low} and TNN \cite{zhang2016exact}, and robust tensor completion algorithms,  including SNN-L1 \cite{goldfarb2014robust}, SNN-HT/ST with Welsch loss (SNN-WHT/WST) \cite{yang2015robust}, TRNN-L1 \cite{huang2020robust} and TNN-L1 \cite{jiang2019robust}. For fair comparison, the adaptive kernel width selection method is also applied to SNN-WHT and SNN-WST in the experiments. Further, the correntropy-based robust matrix completion algorithm \cite{he2019robust} is also included in the comparisons, where the tensor is treated as $n_3$ matrices of dimension $n_1\times n_2$. In the experiments, we refer to this matrix-completion-based method as HQ-MCASD. {We also report the run-time of the proposed methods on a GPU (designated with suffix 'parallel') by simply using the `gpuArray' data structure in MATLAB. All algorithms are implemented using MATLAB r2019b on a standard 16-GB memory PC with a 2.6-GHz CPU and an NVIDIA RTX3070 GPU.}

\begin{figure*}[htbp]
\centering
\includegraphics[width=0.9\textwidth]{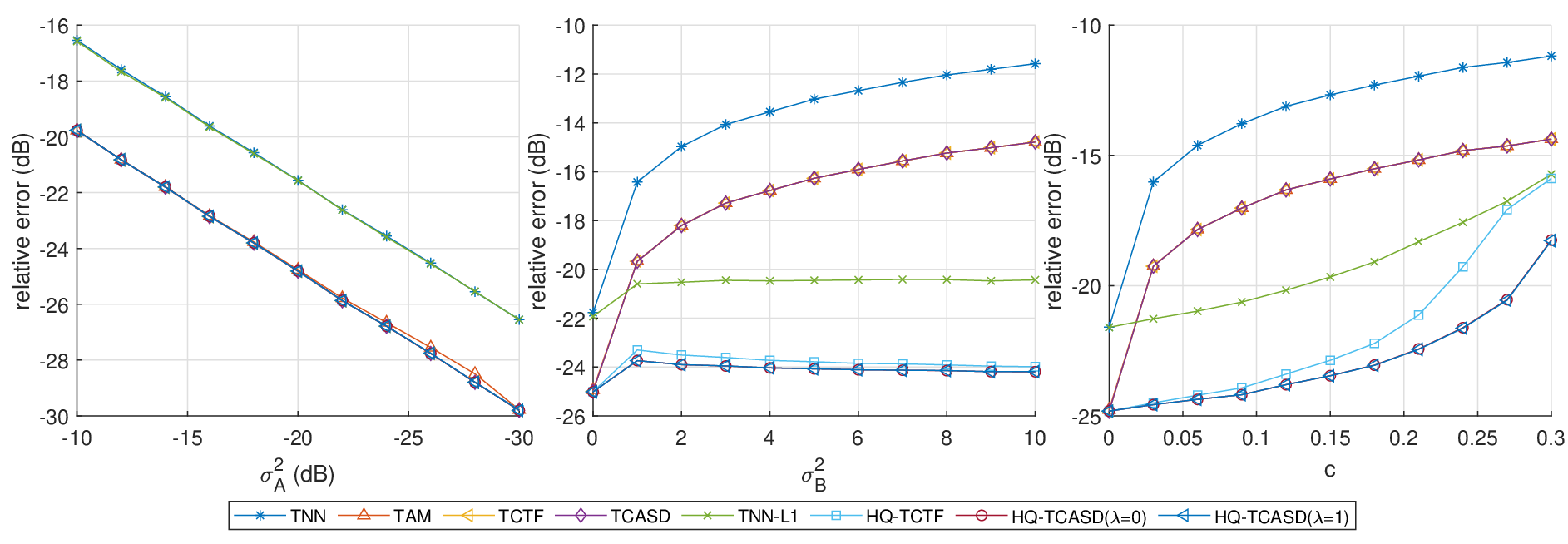} 
\caption{Curves of average relative error under different noise environments. Left: $c=0$. Middle: $\sigma_A^2=0.01,c=0.1$. Right: $\sigma_A^2=0.01$,$\sigma_B^2=1$.}
\label{fig:syn_GMM}
\end{figure*}

\begin{figure}[htbp]
\centering
\includegraphics[width=0.45\textwidth]{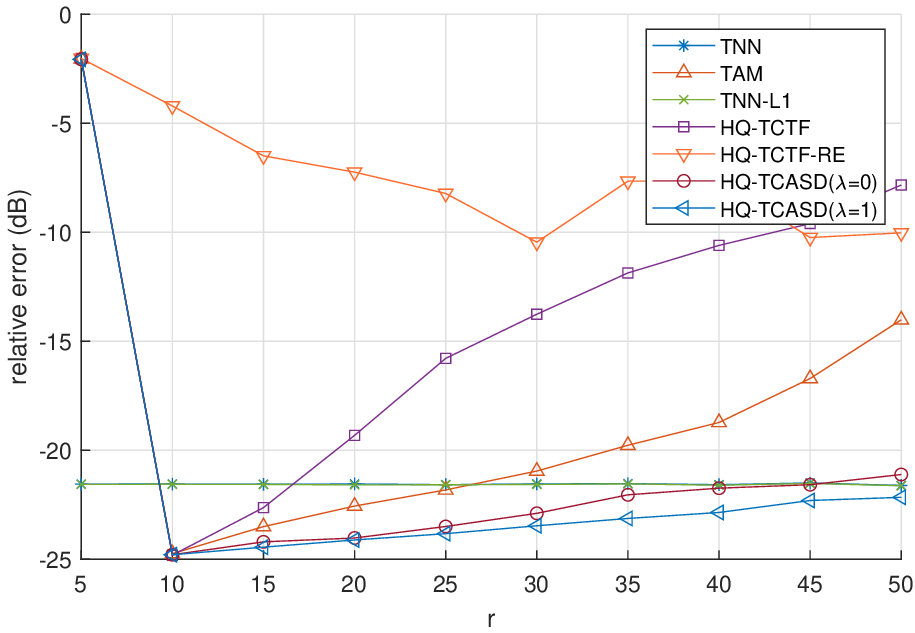} 
\caption{Average relative error as function of the rank parameter $r$ with Gaussian noise.}
\label{fig:syn_rank}
\end{figure}

\begin{figure}[htbp]
\centering
\includegraphics[width=0.48\textwidth]{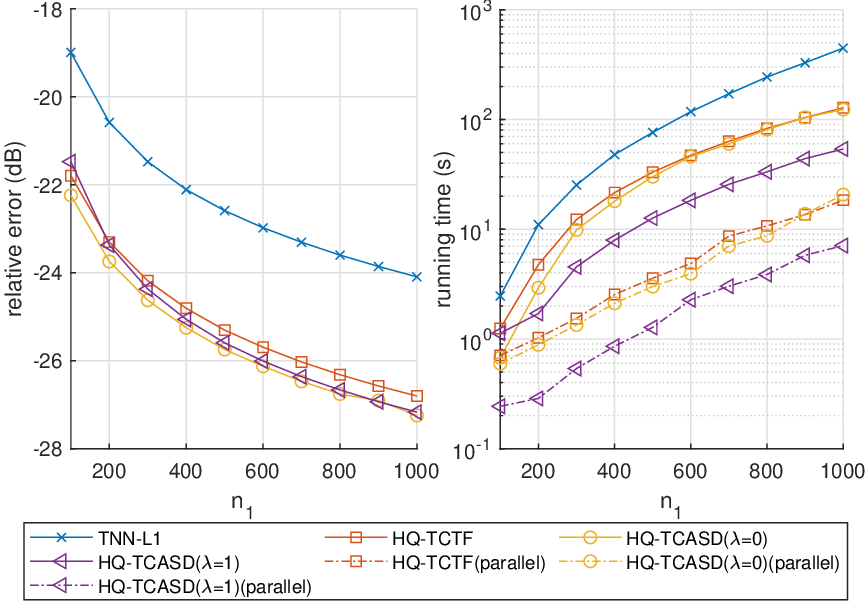} 
\caption{{Average relative error (left) and average run-time (right) as function of $n_1$ under GMM noise.}}
\label{fig:syn_size}
\end{figure}

In all simulations, the maximum number of iterations of all algorithms is set to $500$ unless explicitly mentioned. The parameter $\eta$ in \eqref{sigmaout} for adaptive kernel width selection is set to $6$ and $2$ for synthetic data and real data, respectively. The lower bound $\sigma_{min}$ for kernel width selection is experimentally set to $0.3$ for synthetic data and $0.15$ for real data. The threshold $\varepsilon$ for the stopping criterion is set to $10^{-9}$ for synthetic data and $10^{-5}$ for real data. The regularization parameter $\beta$ for HQ-TCTF is set to $1$. For real data, the $\lambda$ for HQ-TCASD is fixed to $0.2$. Other parameters for each algorithm are tuned to achieve the best performance in each task. Note that the parameters of the different algorithms are not adapted across different noise settings in each simulation. Fixing the parameters is important since, the noise properties could be changing and may not be measurable in practice.

\subsection{Synthetic Data}
In this section, we verify the performance of the proposed algorithms using synthetic data. The dimensions of the tensor are set to $n_1=n_2=200, n_3=20$. The low-tubal-rank tensor $\boldsymbol{\cal{M}}$ with tubal rank $\bar{r}$ is obtained by the t-product of two tensors whose entries are generated from a zero mean Gaussian distribution with unit variance. The indicator tensor $\boldsymbol{\cal{P}}$ with observation fraction $p$ is generated by randomly and uniformly assigning $p\times 100\%$ of the entries of $\boldsymbol{\cal{P}}$ the value $1$. The performance of an instance of tensor completion is evaluated using the relative error
\begin{equation}
\label{NMSE}
rel.err = \frac{{{{\left\|{\hat {\boldsymbol{\cal{M}}}}-\boldsymbol{\cal{M}} \right\|}_F}}}{{{\| \boldsymbol{\cal{M}} \|}_F}}\:,
\end{equation}
where $\hat{\boldsymbol{\cal{M}}}$ is the recovered tensor. The performance is evaluated by taking the ensemble average of the relative error over $T$ independent Monte Carlo runs of different instances of $\boldsymbol{\cal{P}}$ and the noise. In this section, we only compare the performance of the proposed algorithms to TNN, TNN-L1, TAM and TCTF since the other algorithms are using different definitions for the tensor rank.

In the experiments, the observed entries of the tensor are perturbed by additive noise generated from the standard two-component Gaussian mixture model (GMM). The probability density function is given by $(1-c)N(0,{\sigma_A^2})+cN(0,\sigma_B^2)$, where $N(0,{\sigma_A^2})$ represents the general Gaussian noise disturbance with variance ${\sigma_A^2}$, and $N(0,\sigma_B^2)$ with a large variance $\sigma_B^2$ captures the outliers. The variable $c$ controls the occurrence probability of outliers.

\begin{table*}[]
{
\caption{Completion Performance (PSNR) Comparison on four images from the DAVIS and SIDD dataset}
\label{tb:image}
\resizebox{\textwidth}{!}{%
\begin{tabular}{@{}ccccccccccccc@{}}
\toprule
Image                     & \begin{tabular}[c]{@{}c@{}}Missing\\ Pattern\end{tabular} & Noise                   & c    & \begin{tabular}[c]{@{}c@{}}Image\\ SNR\end{tabular} & SNN-L1         & \begin{tabular}[c]{@{}c@{}}SNN-\\ WST\end{tabular} & \begin{tabular}[c]{@{}c@{}}SNN-\\ WHT\end{tabular} & TRNN-L1 & TNN-L1 & \begin{tabular}[c]{@{}c@{}}HQ-\\ MCASD\end{tabular} & \begin{tabular}[c]{@{}c@{}}HQ-\\ TCTF\end{tabular} & \begin{tabular}[c]{@{}c@{}}HQ-\\ TCASD\end{tabular} \\ \midrule
\multirow{4}{*}{bus}      & \multirow{4}{*}{random(50\%)}& \multirow{4}{*}{\begin{tabular}[c]{@{}c@{}}Stripe\\ GMM\end{tabular}} 
                                                                  & 0    & 23.62 & 25.36          & 26.27& 25.61& 24.58   & 27.25  & 26.26 & {\ul 28.76} & \textbf{29.12} \\
                          &             &                         & 0.1  & 7.69 & 25.01          & 25.69& 25.19& 23.95   & 26.25  & 25.75 & {\ul 28.16} & \textbf{28.45}                                      \\
                          &             &                         & 0.2  & 3.03 & 24.13          & 24.85& 23.84& 23.21   & 24.98  & 20.64 & {\ul 26.47} & \textbf{27.26}                                      \\
                          &             &                         & 0.3  & -1.34 & 23.10          & {\ul 23.66} & 19.97& 22.46   & 23.22  & 11.95 & 20.45& \textbf{24.50}                                      \\ \midrule
\multirow{4}{*}{dance}    & \multirow{4}{*}{watermark}   & \multirow{4}{*}{\begin{tabular}[c]{@{}c@{}}Stripe\\ PSP\end{tabular}} 
                                                                  & 0    & 22.44 & 29.06          & {\ul 29.96} & 29.47& 29.85   & 29.25  & 29.59 & 27.47& \textbf{31.24}  \\
                          &             &                         & 0.1  & 6.01 & 27.07          & 21.63& 28.91& 28.36   & 27.12  & {\ul 28.94}  & 27.22& \textbf{30.88}                                      \\
                          &             &                         & 0.2  & 3.21 & 24.05          & 13.24& {\ul 27.99} & 25.94   & 23.27  & 25.41 & 26.20& \textbf{29.49}                                      \\
                          &             &                         & 0.3  & 1.38 & 17.69          & 11.28& \textbf{24.53}        & 23.10   & 15.36  & 11.94 & 18.47& {\ul 24.38}                               \\ \midrule
\multirow{4}{*}{board}    & \multirow{4}{*}{random(50\%)}& \multirow{4}{*}{Unknown}& \multirow{4}{*}{\textbackslash{}} 
                                                                         & 24.49 & 37.91          & 38.59& 34.44& 37.94   & 38.23  & 37.23 & {\ul 39.43} & \textbf{39.58}  \\
                          &             &                         &      & 18.88 & 34.12          & 37.68& 33.89& 37.23   & 37.40  & 36.77 & {\ul 38.55} & \textbf{38.66}                                      \\
                          &             &                         &      & 13.90 & \textbf{30.89} & {\ul 30.77} & 28.32& 29.73   & 29.74  & 28.41 & 30.50& 30.46                                               \\
                          &             &                         &      & 9.20 & \textbf{24.98} & 23.42& 21.43& 23.35   & 23.32  & 21.79 & {\ul 23.94} & 23.75                                               \\ \midrule
\multirow{4}{*}{alphabet} & \multirow{4}{*}{watermark}   & \multirow{4}{*}{Unknown}& \multirow{4}{*}{\textbackslash{}} 
                                                                         & 22.63 & 37.50          & \textbf{38.37}        & 36.91& 37.23   & 37.88  & 35.65 & 36.76& {\ul 38.01}  \\
                          &             &                         &      & 19.21 & 31.65          & 33.02& 33.70& 30.59   & 31.78  & 33.20 & {\ul 34.28} & \textbf{34.70}                                      \\
                          &             &                         &      & 16.39 & 28.40          & 29.23& 30.72& 27.29   & 28.48  & 30.73 & {\ul 31.84} & \textbf{31.92}                                      \\
                          &             &                         &      & 12.98 & 25.04          & 25.46& 27.00& 24.06   & 25.09  & 27.42 & {\ul 28.54} & \textbf{28.43}                                      \\ \bottomrule
\end{tabular}%
}
}
\end{table*}

We first investigate the performance of the algorithms under different settings for the noise. The observation fraction $p$ is set to $0.5$ and the tubal rank $\bar{r}$ of $\boldsymbol{\cal{M}}$ is set to $10$. The rank parameter for all the algorithms is set to the true value, i.e. $r=10$. For each noise distribution, we average over $20$ Monte Carlo runs. The average relative error under different noise distributions is shown in Fig.~\ref{fig:syn_GMM}. One can observe that for Gaussian noise (i.e., $c=0$), all algorithms expect TNN and TNN-L1 achieve the same favorable performance, however, for GMM noise with $c\neq 0$, the proposed robust algorithms HQ-TCTF and HQ-TCASD outperform all the other algorithms. Also, HQ-TCASD is shown to slightly outperform HQ-TCTF.

In many practical situations, the actual rank $\bar{r}$ may not be known. Therefore, we study the performance under different settings of $r$. Again, the observation fraction $p$ is set to $0.5$ and the actual tubal rank $\bar{r}= 10$. We use a a Gaussian noise distribution with $\sigma_A^2=0.01$. For all factorization-based algorithms, 
we gradually change the rank parameter $r$ between $5$ and $50$. Note that TNN and TNN-L1 do not require setting the rank since they use convex relaxation as described in Section \ref{sec:LTR_TC}. The other parameters are set as in the previous simulation. For HQ-TCTF, an additional algorithm with adaptive rank estimation (namely HQ-TCTF-RE) is also included for comparison. The average relative error under different rank parameters $r$ is shown in Fig.~\ref{fig:syn_rank} for the different algorithms. As shown, HQ-TCASD is still able to successfully complete the tensor $\boldsymbol{\cal{M}}$ with low relative error even when $r$ is set larger than actual $\bar{r}$.

Finally, we compare the performance of the proposed algorithms and TNN-L1 with different tensor size under the GMM noise model. Here, we only compare to TNN-L1 since it is the only algorithm other than the proposed methods that can yield successful recovery under the GMM noise as shown in Fig. \ref{fig:syn_GMM}. The tensor size is set to $n_1 = n_2$ and $n_3=20$. The parameters of the GMM noise are set to $c=0.1, \sigma_A^2=0.01$ and $\sigma_B^2=10$. The rank $\bar{r}$ is set to $n_1\times0.05$. The rank of HQ-TCASD with $\lambda=1$ is set to $\bar{r}+5$ for fast completion speed. We gradually increase $n_1$ from $100$ to $1000$ and average the relative error over 20 Monte Calro runs. The average relative error and average running time are shown in Fig.\ref{fig:syn_size}. One can observe that the proposed algorithms always yield significant lower relative error and smaller computation time than TNN-L1. {Specifically, the parallel computation can further speed up the computation of the proposed methods by an order of magnitude.}

\begin{figure}[ht]
\centering
\includegraphics[width=0.45\textwidth]{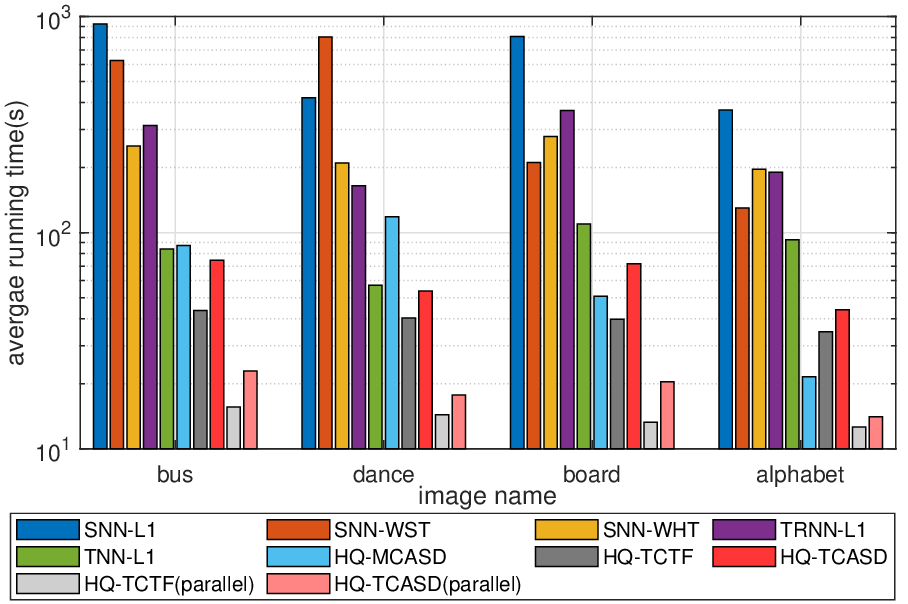}
\caption{Average running time for each image.}
\label{fig:image_time}
\end{figure}

\begin{figure*}[ht]
\centering
\includegraphics[width=1\textwidth]{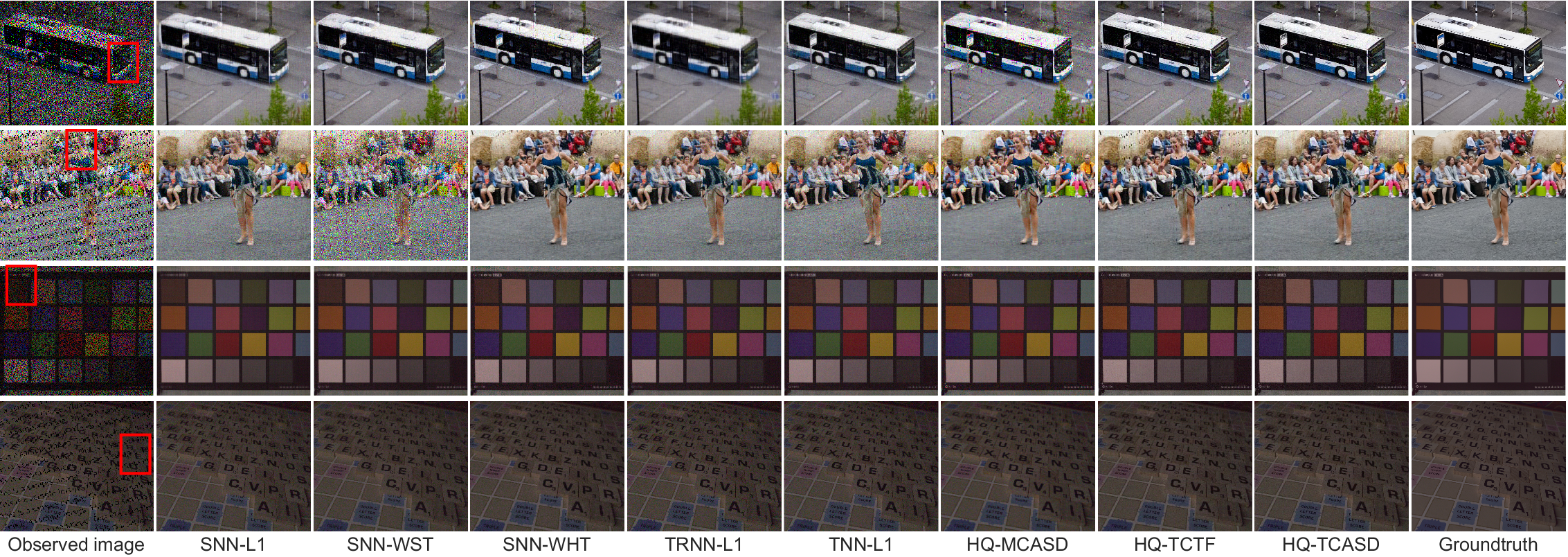}
\caption{{Images recovered by different algorithms under different noise distributions with $c=0.2$.}}
\label{fig:image_large}
\end{figure*}

\begin{figure*}[ht]
\centering
\includegraphics[width=1\textwidth]{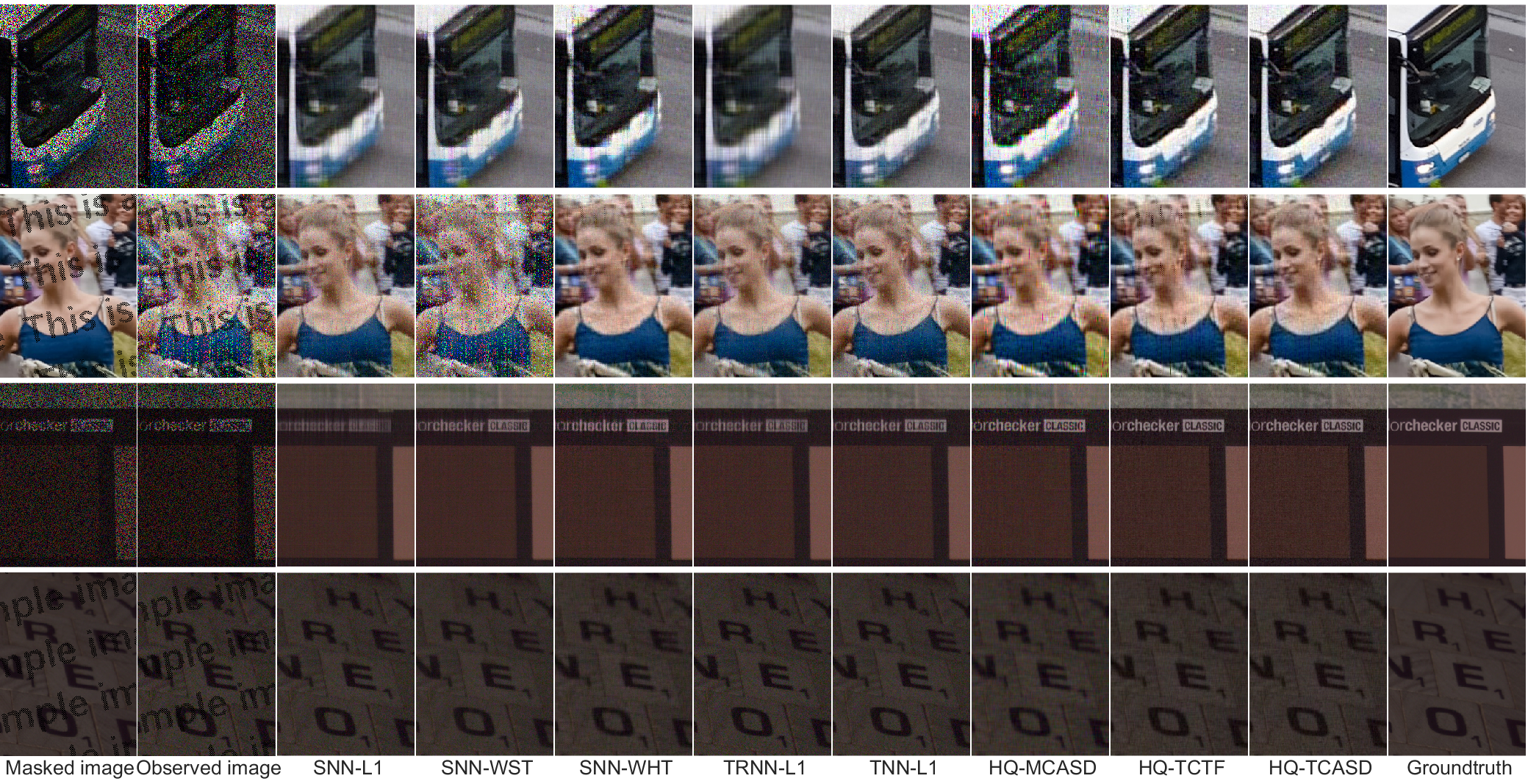}
\caption{{Enlarged regions (red rectangles of Fig.\ref{fig:image_large}) from the images recovered by different algorithms.}}
\label{fig:image_small}
\end{figure*}

\subsection{Image Inpainting}
Image inpainting aims to recover the missing pixels of an image from the observed pixels of the image. Because many images can be well approximated by a low-rank representation, image inpainting can be seen as a matrix or tensor completion task \cite{zhou2017tensor}, and has been widely used for evaluating performance of matrix and tensor completion algorithms. When the observed pixels are corrupted with impulsive noise or outliers, the image inpainting task is more challenging. In this section, we evaluate the performance of the proposed HQ-TCTF and HQ-TCASD algorithms, along with other state-of-the-art robust completion algorithms, on the robust color image inpainting task with multiple noise distributions and missing pixel patterns. The performance evaluation metric is the peak signal-to-noise ratio (PSNR) defined as
\[
PSNR=10\log_{10}\frac{I_{max}^3n_1n_2n_3}{{{\left\| {{\hat {\boldsymbol{\mathcal{M}}}}}-\boldsymbol{\mathcal{M} }\right\|}_F^2}}\:,
\]
where $I_{max}$ denotes the largest value of the pixels of the image data. A higher PSNR signifies better recovery performance.

We evaluate the completion performance of the different algorithms using four images. The first two images `bus' and `paragliding' are chosen from the Densely Annotated Video Segmentation (DAVIS) 2016 dataset\footnote{https://davischallenge.org/davis2016/code.html} \cite{perazzi2016benchmark}. {Different kinds of synthetic noise are added to these two images to obtain the noisy images. The last two images `board' and `alphabet' are selected from the Smartphone Image Denoising Dataset (SIDD)\footnote{https://www.eecs.yorku.ca/$\sim$kamel/sidd/index.php}. For each image, four (noisy) photos captured using a Samsung Galaxy S6 Edge are provided with different lighting conditions along with the ground truth (noiseless) image. The noise comes from the camera itself and no synthetic noise is added.} All images are scaled to $1920\times1080$, so each color image can be regarded as a $1920\times1080\times3$ tensor.

{The completion performance is tested on two types of missing pixel patterns. In the first pattern, we independently and randomly select $50\%$ pixels from each channel as the missing pixels. In the second, a watermark is added to all channels of the image, and the missing pixels correspond to pixels covered by the watermark.} 

{We evaluate performance using four types of noises: 1) \emph{GMM noise}: All observed pixels are perturbed by GMM noise described in the previous experiment with $\sigma_A^2=0.001$ and $\sigma_B^2=1$ and parameter $c$. 2) \emph{Possion+Salt-and-pepper (PSP) noise}: $c\times100\%$ of the observed pixels are randomly selected and perturbed with Salt-and-pepper noise, and the remaining observed pixels are perturbed with Poisson noise. 3) \emph{Stripe GMM noise}: For each channel, $50\%$ of the columns are randomly selected, of which $2c\times100\%$ of the observed pixels are perturbed with Gaussian noise $N(0,1)$. The remaining observed pixels are perturbed by Gaussian noise $N(0,0.01)$. 4) \emph{Stripe PSP noise}: This is similar to Stripe GMM noise, but we replace the Gaussian noise $N(0,1)$ and $N(0,0.01)$ in Stripe GMM noise with Salt-and-pepper noise and Poisson noise, respectively.}

The multi-rank vectors for HQ-TCASD and HQ-TCTF are set to $[150,20,20]$ and $[120,20,20]$, respectively. The average PSNR for the four images is reported in Table \ref{tb:image} for different values of the noise parameter $c$, and the average run-time for each image is shown in Fig. \ref{fig:image_time}. One can observe that HQ-TCASD achieves the highest PSNR for most of the images, and HQ-TCTF is the second best. {Further, parallel computation significantly reduces the computational cost of the proposed HQ-TCTF and HQ-TCASD.} Examples of the recovered full and partially enlarged images are shown in Fig.~\ref{fig:image_large} and Fig.~\ref{fig:image_small}, respectively. As shown, the methods proposed yield visually clearer texture and more accurate colors than the other methods.

\begin{figure*}[htbp]
\centering
\includegraphics[width=1\textwidth]{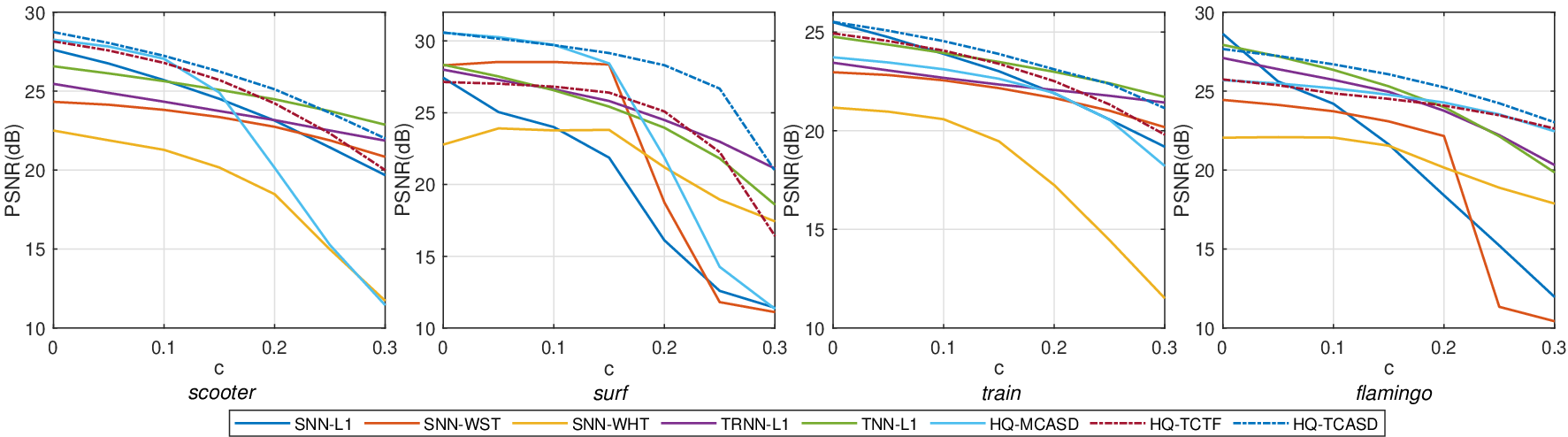} 
\caption{{Average PSNR on four videos from the DAVIS dataset versus parameter $c$. Missing patterns (from left to right): random(50$\%$), watermark, random(50$\%$), watermark. Noise distributions (from left to right): Stripe GMM, Stripe PSP, GMM, PSP. The dashed lines are for the proposed algorithms}.}
\label{fig:video_curves}
\end{figure*}

\subsection{Video Data Completion}
In this section, we evaluate the performance of the algorithms using video data. Four gray-scale video sequences from the DAVIS 2016 dataset are used for testing completion performance. Due to the limitation of the computer memory, the resolution of the video is scaled down to $1280\times 720$ from the original $1920\times 1080$ resolution, and the first 30 frames of each sequence are selected, such that each video sequence forms a tensor of size $1280 \times 720 \times 30$. The multi-rank vectors for HQ-TCASD  and HQ-TCTF are set to $[80, 80,\ldots, 80]$ and $[80, 60,\ldots, 60]$, respectively.

\begin{figure}[htbp]
\centering
\includegraphics[width=0.45\textwidth]{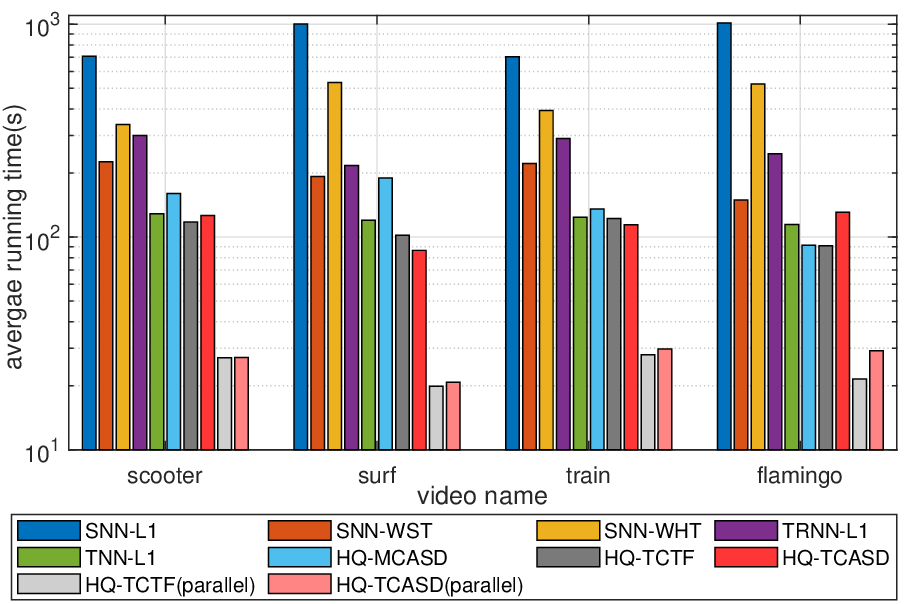} 
\caption{Average running time for each video.}
\label{fig:video_time}
\end{figure}

\begin{figure*}[htbp]
\centering
\includegraphics[width=1\textwidth]{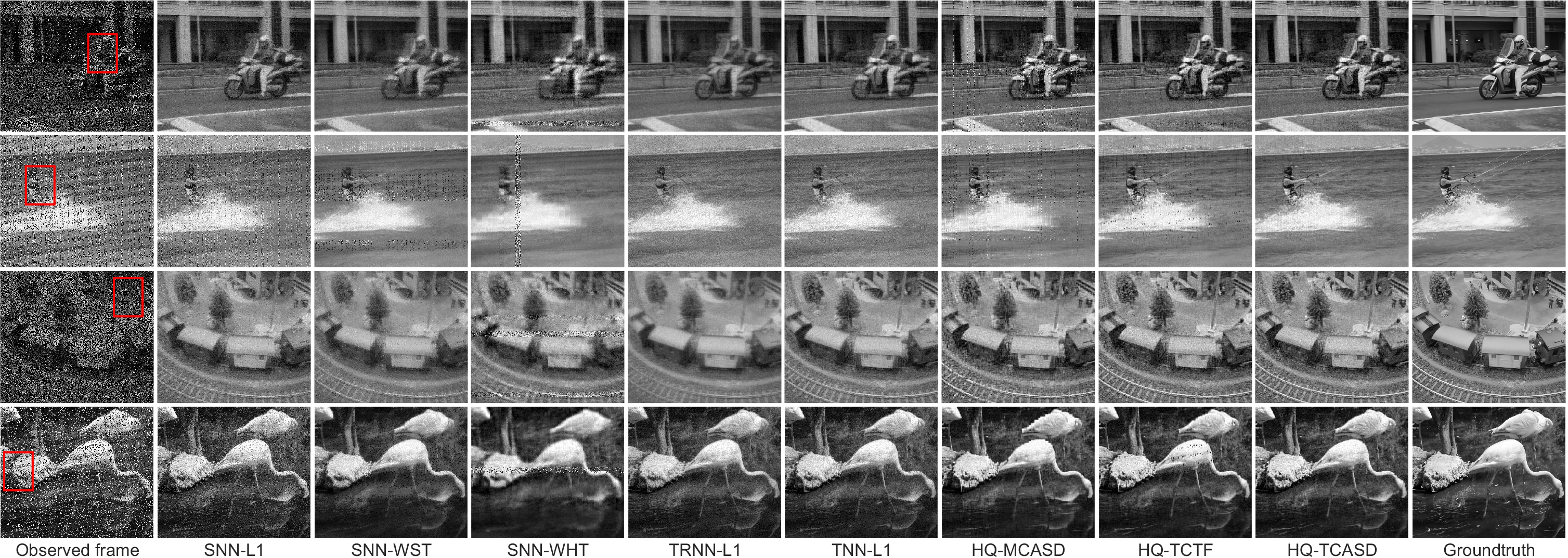}
\caption{{Frames recovered by different algorithms under different noise distributions with $c=0.2$.}}
\label{fig:video_large}
\end{figure*}

\begin{figure*}[htbp]
\centering
\includegraphics[width=1\textwidth]{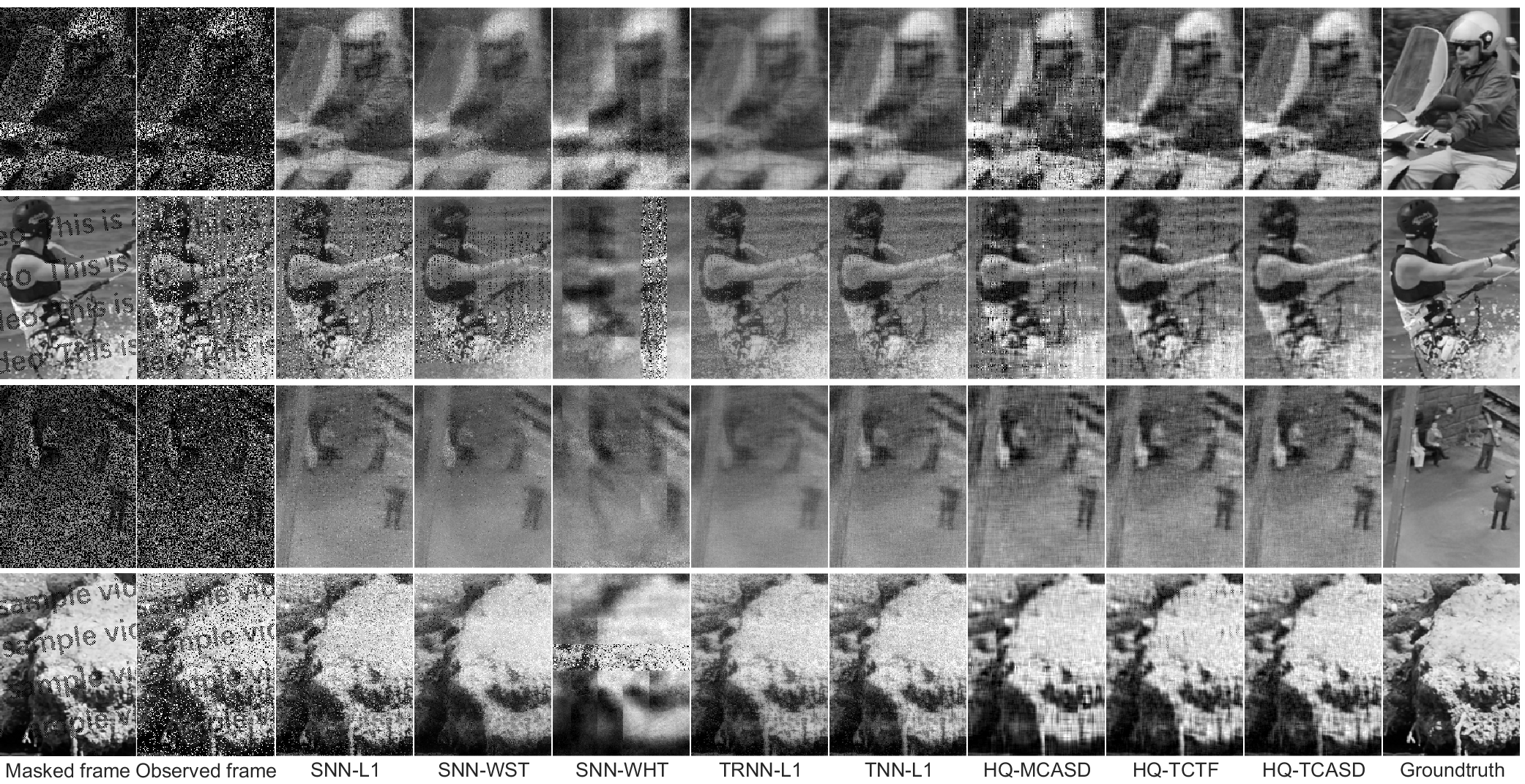}
\caption{{Enlarged regions (red rectangles of Fig.\ref{fig:video_large}) of recovered frames by different algorithms.}}
\label{fig:video_small}
\end{figure*}

Similar to the image inpainting task, we compare performance under different missing pixel patterns and noise distributions. The curves of average PSNR for different values of $c$ are shown in Fig.~\ref{fig:video_curves}. The corresponding average running times are depicted in Fig.~\ref{fig:video_time}. {The proposed HQ-TCASD algorithm achieves the highest PSNR values in most situations, and HQ-TCASD achieves a 3-fold speedup over other algorithms using parallel computation.} 
To shed more light on performance, Fig.~\ref{fig:video_large} illustrates examples of recovered video frames from four video sequences. In Fig.~\ref{fig:video_small}, we zoom in on the regions of Fig.~\ref{fig:video_large} surrounded by the red rectangles. 
It can be seen that HQ-TCASD yields frames that are less noisy and with better contrast than the ones recovered by the other methods.

{We also investigate the performance with an increasing number of video frames. The `train' video with GMM noise $c=0.2$ is utilized in this experiment. The video length is increased from $1$ to $50$ and the corresponding average PSNR curves of different algorithms are shown in Fig.~\ref{fig:video_theory} (right). As shown, at first, the average PSNR increases rapidly as the number of frames increases. Then, the average PSNR of all algorithm remains unchanged or slightly decreases except SNN-WHT and SNN-L1. To better understand the tubal rank property with an increasing number of frames, we set the tubal multi-rank to the same value $r$ and compute the PSNR for the best $r$-tubal-rank approximation of the original video. The results are shown in Fig.~\ref{fig:video_theory} (left). It can be seen that the performance only degrades slightly as the number of frames increases. Therefore, one can use a fixed setting of the tubal rank for different number of frames, which is also verified in Fig.~\ref{fig:video_theory} (right).}

\begin{figure}[htbp]
\centering
\includegraphics[width=0.48\textwidth]{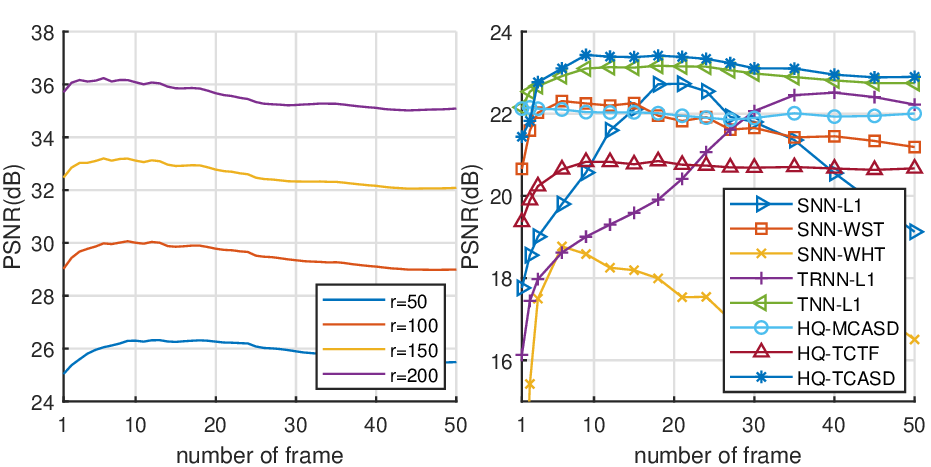}
\caption{{Left: PSNR curves of best $r$-tubal-rank approximation of the original video with a different number of frames. Right: PSNR curves of different algorithms versus number of frames.}}
\label{fig:video_theory}
\end{figure}

\subsection{Traffic Data Prediction}
In this section, we further evaluate the performance of the algorithms using traffic data. The traffic data is generated from the Large-scale PeMS traffic speed dataset\footnote{https://doi.org/10.5281/zenodo.3939793} \cite{mallick2020transfer}. The data registers traffic speed time series from $11160$ sensors over $4$ weeks with $288$ time points per day (i.e., 5-min frequency) in California, USA. Thus it forms a $11160\times288\times28$ tensor. Each value of the data is normalized such that all data are in the range $[0,1]$. In this experiment, we randomly and uniformly selected $50\%$ of the data points as the observed data. The noise parameter $\sigma_A^2$ is set to zero and the outliers have $\sigma_B^2=1$. For HQ-TCASD and HQ-TCTF, the elements of the multi-rank vector are all fixed at $20$. $20$ Monte Calro runs are performed for each value of $c$ with different selections of observed data and noise. The values of the average relative error under different simulation settings are reported in Table \ref{tb:traffic}. {The running time in seconds of HQ-TCASD and HQ-TCTF using parallel computation is shown between brackets.} HQ-TCASD achieves the best performance for $c=0.2$ and $0.3$. To better illustrate the recovery performance, an example of the data recovered from sensor No. $9960$ on the $26$-th day under $c=0.3$ is depicted in Fig.~\ref{fig:traffic}. It can be seen that the proposed HQ-TCASD outperforms the other algorithms.

\renewcommand\arraystretch{1.1}
\begin{table*}[]
\caption{Completion performance (relative error) comparison on traffic data.}
\label{tb:traffic}
\resizebox{\textwidth}{!}{%
\begin{tabular}{@{}cccccccccc@{}}
\toprule
Parameter              & Metric  & SNN-L1  & SNN-WST & SNN-WHT & TRNN-L1 & TNN-L1       & HQ-MCASD        & HQ-TCTF        & HQ-TCASD        \\ \midrule
\multirow{2}{*}{$c=0.1$} & \textit{rel.err} & 0.0673  & 0.0705  & 0.0753  & 0.0489  & 0.0436       & \textbf{0.0377} & {\ul 0.0385}   & 0.0397          \\
                       & time(s) & 14437.7 & 8145.53 & 6891.92 & 12208.6 & 568.65       & 189.78          & 316.56 (223.52) & 959.25 (370.15)  \\
\multirow{2}{*}{$c=0.2$} & \textit{rel.err} & 0.0692  & 0.0728  & 0.0809  & 0.0511  & 0.0466       & 0.0575          & {\ul 0.0451}   & \textbf{0.0423} \\
                       & time(s) & 10440.9 & 9423.65 & 6745.36 & 13850.3 & 710.64       & 311.89          & 351.35 (243.23) & 817.58 (327.38)  \\
\multirow{2}{*}{$c=0.3$} & \textit{rel.err} & 0.0726  & 0.0755  & 0.0935  & 0.0536  & {\ul 0.0527} & 0.1179          & 0.0560         & \textbf{0.0476} \\
                       & time(s) & 13092.0 & 10932.2 & 6328.92 & 14832.2 & 1011.82      & 618.94          & 396.08 (269.25) & 738.43 (291.34)  \\ \bottomrule
\end{tabular}%
}
\end{table*}

\begin{figure}[ht]
\centering
\includegraphics[width=0.5\textwidth]{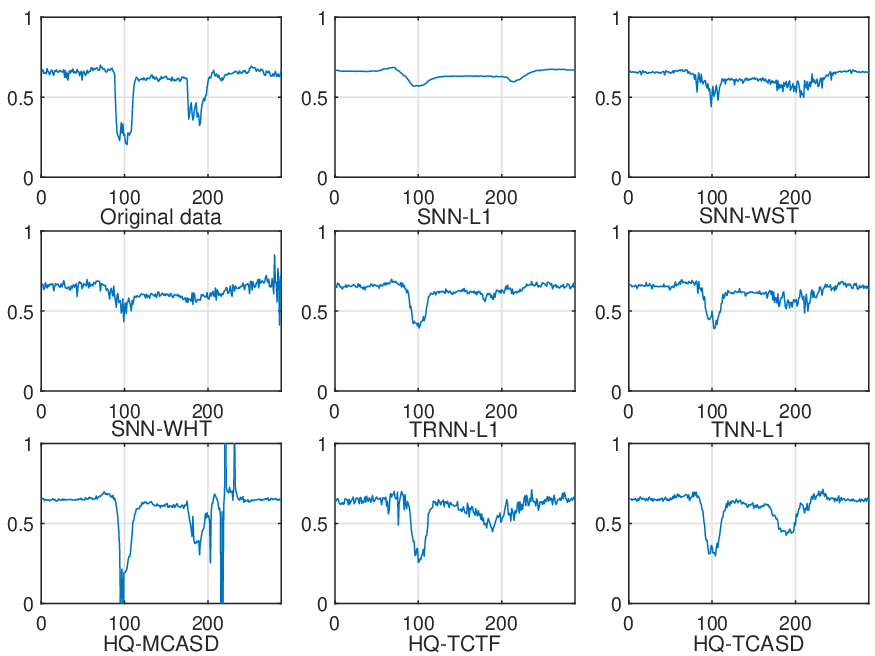}
\caption{Examples of the recovered missing signals of traffic data.}
\label{fig:traffic}
\end{figure}

\section{Conclusion}
\label{sec:conc}
In this paper, we proposed a novel robust tensor completion method that utilizes tensor-factorization to impose a low-tubal-rank structure, which avoids the computation of the SVD. The correntropy measure is introduced to alleviate the impact of large outliers. Based on a half-quadratic minimization technique, two efficient robust tensor completion algorithms, HQ-TCTF and HQ-TCASD, were proposed and their convergence is analyzed. Experiments on both synthetic and real datasets demonstrate the superior performance of the proposed methods compared to existing state-of-the-art algorithms.

\begin{appendices}
\section{Proof of Proposition 2}
Since $\boldsymbol{\cal{W}}$ is an optimal solution for \eqref{eq:HQW}, we have
\begin{equation}
J({\boldsymbol{\cal{X}}^{t+1},\boldsymbol{\cal{Y}}}^{t+1},\boldsymbol{\cal{W}}^{t+1})\leq J({\boldsymbol{\cal{X}}^{t+1},\boldsymbol{\cal{Y}}}^{t+1},\boldsymbol{\cal{W}}^{t}) \:.
\end{equation}
By fixing $\boldsymbol{\cal{W}}$ and defining $\boldsymbol{Q}=\sqrt{\tilde{\boldsymbol{W}}} \circ \tilde{\boldsymbol{P}}$, we obtain the following \begin{equation}
\begin{aligned}
\label{eq:ASD_p1_g}
&2J(\boldsymbol{U}^{t+1},\hat{\boldsymbol{Y}}^{t})-2J(\boldsymbol{U}^{t},\hat{\boldsymbol{Y}}^{t})\\
&={\left\|{\boldsymbol{Q} \circ \left({\tilde{\boldsymbol{M}}}-{\boldsymbol{U}^{t+1}{\hat{{\boldsymbol{Y}}}^{t}}}\right)}\right\|_{F}^2}-{\left\|{\boldsymbol{Q} \circ \left({\tilde{\boldsymbol{M}}}-{\boldsymbol{U}^{t}{\hat{{\boldsymbol{Y}}}^{t}}}\right)}\right\|_{F}^2}\\
&={\left\|{\boldsymbol{Q} \!\circ\! \left({\tilde{\boldsymbol{M}}}\!-\!({\boldsymbol{U}^{t}-\mu'^t_{\boldsymbol{U}}{\boldsymbol{g}}^{'t}_{\boldsymbol{U}}){\hat{{\boldsymbol{Y}}}^{t}}}\right)}\right\|_{F}^2}\!-\!{\left\|{\boldsymbol{Q} \!\circ\! \left({\tilde{\boldsymbol{M}}}\!-\!{\boldsymbol{U}^{t}{\hat{{\boldsymbol{Y}}}^{t}}}\right)}\right\|_{F}^2}\\
&=\!(\mu'^t_{\boldsymbol{U}})^2{\left\|{\boldsymbol{Q} \!\circ\! \left({{\boldsymbol{g}}^{'t}_{\boldsymbol{U}}{\hat{{\boldsymbol{Y}}}^{t}}}\right)}\right\|_{F}^2}\!+2\mu'^t_{\boldsymbol{U}}\!\left<{\!\boldsymbol{Q} \!\circ\! \left({\tilde{\boldsymbol{M}}}\!-\!{\boldsymbol{U}^{t}{\hat{{\boldsymbol{Y}}}^{t}}}\right)}, {{\boldsymbol{g}}^{'t}_{\boldsymbol{U}}{\hat{{\boldsymbol{Y}}}^{t}}}\right>\\
&=\frac{(\|\boldsymbol{g}^{'t}_{{{\boldsymbol{U}}}}\|_F^2)^2}{\left\|\boldsymbol{Q}\circ\left( {\boldsymbol{g}}^{'t}_{{\boldsymbol{U}}}{{\hat{\boldsymbol{Y}}}}\right)\right\|_F^2}-2\mu'^t_{\boldsymbol{U}}\left<{\boldsymbol{g}}^{t}_{\boldsymbol{U}}, {{\boldsymbol{g}}^{'t}_{\boldsymbol{U}}}\right>
\end{aligned}
\end{equation}
We can further simplify $\langle{\boldsymbol{g}}_{\boldsymbol{U}}, {{\boldsymbol{g}}^{'}_{\boldsymbol{U}}}\rangle$ as
\begin{equation}
\begin{aligned}
\label{eq:ASD_p2_g1}
\left<{\boldsymbol{g}}_{\boldsymbol{U}}, {{\boldsymbol{g}}'_{\boldsymbol{U}}}\right>=&\operatorname{tr}\left(\boldsymbol{g}^{*}_{\boldsymbol{U}}\boldsymbol{F}^{-1}\operatorname{bdiagz}(\boldsymbol{F}\boldsymbol{g}_{\boldsymbol{U}})\right)\\
=&\frac{1}{n_3}\operatorname{tr}\left((\boldsymbol{F}\boldsymbol{g}_{\boldsymbol{U}})^{*}\operatorname{bdiagz}(\boldsymbol{F}\boldsymbol{g}_{\boldsymbol{U}})\right)\\
=&\frac{1}{n_3}\left\|\operatorname{bdiagz}(\boldsymbol{F}\boldsymbol{g}_{\boldsymbol{U}})\right\|_F^2
\end{aligned}
\end{equation}
where $\operatorname{tr}(\cdot)$ denotes the trace operator. Further, $\|\boldsymbol{g}'_{{{\boldsymbol{U}}}}\|_F^2$ can be simplified as
\begin{equation}
\begin{aligned}
\label{eq:ASD_p2_g2}
\|\boldsymbol{g}'_{{{\boldsymbol{U}}}}\|_F^2=\frac{1}{n_3}\|\boldsymbol{F}\operatorname{bdiagz}(\boldsymbol{F}\boldsymbol{g}_{\boldsymbol{U}})\|_F^2=\frac{1}{n_3}\|\operatorname{bdiagz}(\boldsymbol{F}\boldsymbol{g}_{\boldsymbol{U}})\|_F^2
\end{aligned}
\end{equation}
where we use the fact that $\boldsymbol{F}^{*}\boldsymbol{F}=\boldsymbol{I}$. Therefore, according to \eqref{eq:ASD_p2_g1} and \eqref{eq:ASD_p2_g2} we have
\begin{equation}
\begin{aligned}
\|\boldsymbol{g}'_{{{\boldsymbol{U}}}}\|_F^2=\langle{\boldsymbol{g}}_{\boldsymbol{U}}, {{\boldsymbol{g}}^{'}_{\boldsymbol{U}}}\rangle
\end{aligned}
\end{equation}
and \eqref{eq:ASD_p1_g} can be written as
\begin{equation}
\label{eq:ASD_p2_g3}
J(\boldsymbol{U}^{t+1},\hat{\boldsymbol{Y}}^{t})-J(\boldsymbol{U}^{t},\hat{\boldsymbol{Y}}^{t})=-\frac{(\|\boldsymbol{g}^{'t}_{{{\boldsymbol{U}}}}\|_F^2)^2}{2\left\|\boldsymbol{Q}\circ\left( {\boldsymbol{g}}^{'t}_{{\boldsymbol{U}}}{{\hat{\boldsymbol{Y}}}}\right)\right\|_F^2}\leq 0
\end{equation}
Similarly, we can obtain
\begin{equation}
\begin{aligned}
\label{eq:ASD_p2_g4}
&J(\boldsymbol{U}^{t+1},\hat{\boldsymbol{Y}}^{t+1})-J(\boldsymbol{U}^{t+1},\hat{\boldsymbol{Y}}^{t})\\
&=-(1\!-\!\lambda)\frac{(\|\boldsymbol{g}^t_{{\hat{\boldsymbol{Y}}}}\|_F^2)^2}{2\left\|{\boldsymbol{Q}} \!\circ\! \left({\boldsymbol{U}^{t+1}}{\boldsymbol{g}}^t_{{\hat{\boldsymbol{Y}}}}\right)\right\|_F^2}\!-\!\lambda\frac{|\langle\boldsymbol{g}^t_{{\hat{\boldsymbol{Y}}}},\boldsymbol{g}'^t_{{\hat{\boldsymbol{Y}}}}\rangle|^2}{2\left\|{\boldsymbol{Q}} \!\circ\! \left({\boldsymbol{U}^{t+1}}{\boldsymbol{g}}'^t_{{\hat{\boldsymbol{Y}}}}\right)\right\|_F^2}\\
&\leq 0
\end{aligned}
\end{equation}
\eqref{eq:ASD_p2_g3} and \eqref{eq:ASD_p2_g4} imply that
\begin{equation}
J(\boldsymbol{U}^{t+1},\hat{\boldsymbol{Y}}^{t+1})\leq J(\boldsymbol{U}^{t},\hat{\boldsymbol{Y}}^{t})
\end{equation}
Thus, according to the relation between $\boldsymbol{U}$ and $\boldsymbol{\cal{X}}$, ${\hat{\boldsymbol{{ Y }}}}$ and $\boldsymbol{\cal{Y}}$, we have that
\begin{equation}
J({\boldsymbol{\cal{X}}^{t+1},\boldsymbol{\cal{Y}}}^{t+1},\boldsymbol{\cal{W}}^{t+1})\leq J({\boldsymbol{\cal{X}}^{t},\boldsymbol{\cal{Y}}}^{t},\boldsymbol{\cal{W}}^{t})\:.
\end{equation}
It can be also verified that $J({\boldsymbol{\cal{X}}^{t},\boldsymbol{\cal{Y}}}^{t},\boldsymbol{\cal{W}}^{t})$ is always bounded below for arbitrary $t$. Thus, $\{J({\boldsymbol{\cal{X}}^{t},\boldsymbol{\cal{Y}}}^{t},\boldsymbol{\cal{W}}^{t}),t=1,2,...\}$ will converge.
\end{appendices}

\bibliographystyle{IEEEtran}
\bibliography{references}

\begin{thebibliography}{10}
\providecommand{\url}[1]{#1}
\csname url@samestyle\endcsname
\providecommand{\newblock}{\relax}
\providecommand{\bibinfo}[2]{#2}
\providecommand{\BIBentrySTDinterwordspacing}{\spaceskip=0pt\relax}
\providecommand{\BIBentryALTinterwordstretchfactor}{4}
\providecommand{\BIBentryALTinterwordspacing}{\spaceskip=\fontdimen2\font plus
\BIBentryALTinterwordstretchfactor\fontdimen3\font minus
  \fontdimen4\font\relax}
\providecommand{\BIBforeignlanguage}[2]{{%
\expandafter\ifx\csname l@#1\endcsname\relax
\typeout{** WARNING: IEEEtran.bst: No hyphenation pattern has been}%
\typeout{** loaded for the language `#1'. Using the pattern for}%
\typeout{** the default language instead.}%
\else
\language=\csname l@#1\endcsname
\fi
#2}}
\providecommand{\BIBdecl}{\relax}
\BIBdecl

\bibitem{dian2017hyperspectral}
R.~Dian, L.~Fang, and S.~Li, ``Hyperspectral image super-resolution via
  non-local sparse tensor factorization,'' in \emph{Proceedings of the IEEE
  Conference on Computer Vision and Pattern Recognition}, 2017, pp. 5344--5353.

\bibitem{zhang2019computational}
S.~Zhang, L.~Wang, Y.~Fu, X.~Zhong, and H.~Huang, ``Computational hyperspectral
  imaging based on dimension-discriminative low-rank tensor recovery,'' in
  \emph{Proceedings of the IEEE International Conference on Computer Vision},
  2019, pp. 10\,183--10\,192.

\bibitem{xie2018unifying}
Y.~Xie, D.~Tao, W.~Zhang, Y.~Liu, L.~Zhang, and Y.~Qu, ``On unifying multi-view
  self-representations for clustering by tensor multi-rank minimization,''
  \emph{International Journal of Computer Vision}, vol. 126, no.~11, pp.
  1157--1179, 2018.

\bibitem{zhao2019multi}
T.~Zhao, Y.~Xu, M.~Monfort, W.~Choi, C.~Baker, Y.~Zhao, Y.~Wang, and Y.~N. Wu,
  ``Multi-agent tensor fusion for contextual trajectory prediction,'' in
  \emph{Proceedings of the IEEE Conference on Computer Vision and Pattern
  Recognition}, 2019, pp. 12\,126--12\,134.

\bibitem{he2006tensor}
X.~He, D.~Cai, and P.~Niyogi, ``Tensor subspace analysis,'' in \emph{Advances
  in Neural Information Processing Systems}, 2006, pp. 499--506.

\bibitem{xie2020robust}
Y.~Xie, J.~Liu, Y.~Qu, D.~Tao, W.~Zhang, L.~Dai, and L.~Ma, ``Robust kernelized
  multiview self-representation for subspace clustering,'' \emph{IEEE
  Transactions on Neural Networks and Learning Systems}, vol.~32, no.~2, pp.
  868--881, 2020.

\bibitem{tang2021one}
Y.~Tang, Y.~Xie, C.~Zhang, Z.~Zhang, and W.~Zhang, ``One-step multiview
  subspace segmentation via joint skinny tensor learning and latent
  clustering,'' \emph{IEEE Transactions on Cybernetics}, 2021.

\bibitem{sidiropoulos2017tensor}
N.~D. Sidiropoulos, L.~De~Lathauwer, X.~Fu, K.~Huang, E.~E. Papalexakis, and
  C.~Faloutsos, ``Tensor decomposition for signal processing and machine
  learning,'' \emph{IEEE Transactions on Signal Processing}, vol.~65, no.~13,
  pp. 3551--3582, 2017.

\bibitem{cichocki2015tensor}
A.~Cichocki, D.~Mandic, L.~De~Lathauwer, G.~Zhou, Q.~Zhao, C.~Caiafa, and H.~A.
  Phan, ``Tensor decompositions for signal processing applications: From
  two-way to multiway component analysis,'' \emph{IEEE Signal Processing
  Magazine}, vol.~32, no.~2, pp. 145--163, 2015.

\bibitem{tang2019tensor}
Y.~Tang, Y.~Xie, X.~Yang, J.~Niu, and W.~Zhang, ``Tensor multi-elastic kernel
  self-paced learning for time series clustering,'' \emph{IEEE Transactions on
  Knowledge and Data Engineering}, 2019.

\bibitem{kiers2000towards}
H.~A. Kiers, ``Towards a standardized notation and terminology in multiway
  analysis,'' \emph{Journal of Chemometrics: A Journal of the Chemometrics
  Society}, vol.~14, no.~3, pp. 105--122, 2000.

\bibitem{tucker1966some}
L.~R. Tucker, ``Some mathematical notes on three-mode factor analysis,''
  \emph{Psychometrika}, vol.~31, no.~3, pp. 279--311, 1966.

\bibitem{zhao2016tensor}
Q.~Zhao, G.~Zhou, S.~Xie, L.~Zhang, and A.~Cichocki, ``Tensor ring
  decomposition,'' \emph{arXiv preprint arXiv:1606.05535}, 2016.

\bibitem{kilmer2011factorization}
M.~E. Kilmer and C.~D. Martin, ``Factorization strategies for third-order
  tensors,'' \emph{Linear Algebra and its Applications}, vol. 435, no.~3, pp.
  641--658, 2011.

\bibitem{candes2009exact}
E.~J. Cand{\`e}s and B.~Recht, ``Exact matrix completion via convex
  optimization,'' \emph{Foundations of Computational mathematics}, vol.~9,
  no.~6, p. 717, 2009.

\bibitem{candes2010matrix}
E.~J. Candes and Y.~Plan, ``Matrix completion with noise,'' \emph{Proceedings
  of the IEEE}, vol.~98, no.~6, pp. 925--936, 2010.

\bibitem{jain2014provable}
P.~Jain and S.~Oh, ``Provable tensor factorization with missing data,'' in
  \emph{Advances in Neural Information Processing Systems}, 2014, pp.
  1431--1439.

\bibitem{kasai2016low}
H.~Kasai and B.~Mishra, ``Low-rank tensor completion: a riemannian manifold
  preconditioning approach,'' in \emph{International Conference on Machine
  Learning}, 2016, pp. 1012--1021.

\bibitem{xu2015parallel}
Y.~Xu, R.~Hao, W.~Yin, and Z.~Su, ``Parallel matrix factorization for low-rank
  tensor completion,'' \emph{Inverse Problems and Imaging}, vol.~9, no.~2, pp.
  601--624, 2015.

\bibitem{zhang2016exact}
Z.~Zhang and S.~Aeron, ``Exact tensor completion using t-svd,'' \emph{IEEE
  Transactions on Signal Processing}, vol.~65, no.~6, pp. 1511--1526, 2016.

\bibitem{zhou2017tensor}
P.~Zhou, C.~Lu, Z.~Lin, and C.~Zhang, ``Tensor factorization for low-rank
  tensor completion,'' \emph{IEEE Transactions on Image Processing}, vol.~27,
  no.~3, pp. 1152--1163, 2017.

\bibitem{liu2019low}
X.-Y. Liu, S.~Aeron, V.~Aggarwal, and X.~Wang, ``Low-tubal-rank tensor
  completion using alternating minimization,'' \emph{IEEE Transactions on
  Information Theory}, vol.~66, no.~3, pp. 1714--1737, 2019.

\bibitem{gilman2020grassmannian}
K.~Gilman and L.~Balzano, ``Grassmannian optimization for online tensor
  completion and tracking in the t-svd algebra,'' \emph{arXiv preprint
  arXiv:2001.11419}, 2020.

\bibitem{goldfarb2014robust}
D.~Goldfarb and Z.~Qin, ``Robust low-rank tensor recovery: Models and
  algorithms,'' \emph{SIAM Journal on Matrix Analysis and Applications},
  vol.~35, no.~1, pp. 225--253, 2014.

\bibitem{han2018generalized}
Z.~Han, Y.~Wang, Q.~Zhao, D.~Meng, L.~Lin, Y.~Tang \emph{et~al.}, ``A
  generalized model for robust tensor factorization with noise modeling by
  mixture of gaussians,'' \emph{IEEE Transactions on Neural Networks and
  Learning Systems}, vol.~29, no.~11, pp. 5380--5393, 2018.

\bibitem{inoue2009robust}
K.~Inoue, K.~Hara, and K.~Urahama, ``Robust multilinear principal component
  analysis,'' in \emph{2009 IEEE 12th International Conference on Computer
  Vision}.\hskip 1em plus 0.5em minus 0.4em\relax IEEE, 2009, pp. 591--597.

\bibitem{huang2014provable}
B.~Huang, C.~Mu, D.~Goldfarb, and J.~Wright, ``Provable low-rank tensor
  recovery,'' \emph{Optimization-Online}, vol. 4252, no.~2, pp. 455--500, 2014.

\bibitem{yang2015robust}
Y.~Yang, Y.~Feng, and J.~A. Suykens, ``Robust low-rank tensor recovery with
  regularized redescending m-estimator,'' \emph{IEEE Transactions on Neural
  Networks and Learning Systems}, vol.~27, no.~9, pp. 1933--1946, 2015.

\bibitem{huang2020robust}
H.~Huang, Y.~Liu, Z.~Long, and C.~Zhu, ``Robust low-rank tensor ring
  completion,'' \emph{IEEE Transactions on Computational Imaging}, vol.~6, pp.
  1117--1126, 2020.

\bibitem{kilmer2013third}
M.~E. Kilmer, K.~Braman, N.~Hao, and R.~C. Hoover, ``Third-order tensors as
  operators on matrices: A theoretical and computational framework with
  applications in imaging,'' \emph{SIAM Journal on Matrix Analysis and
  Applications}, vol.~34, no.~1, pp. 148--172, 2013.

\bibitem{jiang2019robust}
Q.~Jiang and M.~Ng, ``Robust low-tubal-rank tensor completion via convex
  optimization.'' in \emph{IJCAI}, 2019, pp. 2649--2655.

\bibitem{wang2019robust}
A.~Wang, X.~Song, X.~Wu, Z.~Lai, and Z.~Jin, ``Robust low-tubal-rank tensor
  completion,'' in \emph{IEEE International Conference on Acoustics, Speech and
  Signal Processing (ICASSP)}, 2019, pp. 3432--3436.

\bibitem{liu2007correntropy}
W.~Liu, P.~P. Pokharel, and J.~C. Pr{\'\i}ncipe, ``Correntropy: Properties and
  applications in non-gaussian signal processing,'' \emph{IEEE Transactions on
  Signal Processing}, vol.~55, no.~11, pp. 5286--5298, 2007.

\bibitem{chen2016generalized}
B.~Chen, L.~Xing, H.~Zhao, N.~Zheng, J.~C. Pr{\i} \emph{et~al.}, ``Generalized
  correntropy for robust adaptive filtering,'' \emph{IEEE Transactions on
  Signal Processing}, vol.~64, no.~13, pp. 3376--3387, 2016.

\bibitem{he2010maximum}
R.~He, W.-S. Zheng, and B.-G. Hu, ``Maximum correntropy criterion for robust
  face recognition,'' \emph{IEEE Transactions on Pattern Analysis and Machine
  Intelligence}, vol.~33, no.~8, pp. 1561--1576, 2010.

\bibitem{he2019robust}
Y.~He, F.~Wang, Y.~Li, J.~Qin, and B.~Chen, ``Robust matrix completion via
  maximum correntropy criterion and half-quadratic optimization,'' \emph{IEEE
  Transactions on Signal Processing}, vol.~68, pp. 181--195, 2019.

\bibitem{zheng2020broad}
Y.~Zheng, B.~Chen, S.~Wang, and W.~Wang, ``Broad learning system based on
  maximum correntropy criterion,'' \emph{IEEE Transactions on Neural Networks
  and Learning Systems}, 2020.

\bibitem{chen2017maximum}
B.~Chen, X.~Liu, H.~Zhao, and J.~C. Principe, ``Maximum correntropy kalman
  filter,'' \emph{Automatica}, vol.~76, pp. 70--77, 2017.

\bibitem{zhao2011kernel}
S.~Zhao, B.~Chen, and J.~C. Principe, ``Kernel adaptive filtering with maximum
  correntropy criterion,'' in \emph{International Joint Conference on Neural
  Networks}, 2011, pp. 2012--2017.

\bibitem{nikolova2005analysis}
M.~Nikolova and M.~K. Ng, ``Analysis of half-quadratic minimization methods for
  signal and image recovery,'' \emph{SIAM Journal on Scientific computing},
  vol.~27, no.~3, pp. 937--966, 2005.

\bibitem{lu2019tensor}
C.~Lu, J.~Feng, Y.~Chen, W.~Liu, Z.~Lin, and S.~Yan, ``Tensor robust principal
  component analysis with a new tensor nuclear norm,'' \emph{IEEE Transactions
  on Pattern Analysis and Machine Intelligence}, vol.~42, no.~4, pp. 925--938,
  2019.

\bibitem{hu2016twist}
W.~Hu, D.~Tao, W.~Zhang, Y.~Xie, and Y.~Yang, ``The twist tensor nuclear norm
  for video completion,'' \emph{IEEE Transactions on Neural Networks and
  Learning Systems}, vol.~28, no.~12, pp. 2961--2973, 2016.

\bibitem{haldar2009rank}
J.~P. Haldar and D.~Hernando, ``Rank-constrained solutions to linear matrix
  equations using powerfactorization,'' \emph{IEEE Signal Processing Letters},
  vol.~16, no.~7, pp. 584--587, 2009.

\bibitem{Singh2009Using}
A.~Singh and J.~C. Principe, ``Using correntropy as a cost function in linear
  adaptive filters,'' in \emph{IEEE International Joint Conference on Neural
  Networks}, 2009, pp. 2950--2955.

\bibitem{singh2014c}
A.~Singh, R.~Pokharel, and J.~Principe, ``The c-loss function for pattern
  classification,'' \emph{Pattern Recognition}, vol.~47, no.~1, pp. 441--453,
  2014.

\bibitem{dennis1978techniques}
J.~E. Dennis~Jr and R.~E. Welsch, ``Techniques for nonlinear least squares and
  robust regression,'' \emph{Communications in Statistics-simulation and
  Computation}, vol.~7, no.~4, pp. 345--359, 1978.

\bibitem{tanner2016low}
J.~Tanner and K.~Wei, ``Low rank matrix completion by alternating steepest
  descent methods,'' \emph{Applied and Computational Harmonic Analysis},
  vol.~40, no.~2, pp. 417--429, 2016.

\bibitem{perazzi2016benchmark}
F.~Perazzi, J.~Pont-Tuset, B.~McWilliams, L.~Van~Gool, M.~Gross, and
  A.~Sorkine-Hornung, ``A benchmark dataset and evaluation methodology for
  video object segmentation,'' in \emph{Proceedings of the IEEE Conference on
  Computer Vision and Pattern Recognition}, 2016, pp. 724--732.

\bibitem{mallick2020transfer}
T.~Mallick, P.~Balaprakash, E.~Rask, and J.~Macfarlane, ``Transfer learning
  with graph neural networks for short-term highway traffic forecasting,''
  \emph{arXiv preprint arXiv:2004.08038}, 2020.

\end{thebibliography}
\end{document}